\documentclass{article}

\usepackage{microtype}
\usepackage{graphicx}
\usepackage{subcaption}
\usepackage{booktabs}
\usepackage{caption}
\usepackage[accepted]{icml2026}

\usepackage{hyperref}

\usepackage{amsmath}
\allowdisplaybreaks
\usepackage{amssymb}
\usepackage{mathtools}
\usepackage{amsthm}
\usepackage{pifont}
\usepackage{tikz}
\usetikzlibrary{positioning,calc}
\usetikzlibrary{arrows.meta}

\usepackage{bm}
\usepackage{xspace}
\usepackage{empheq}
\usepackage{nicefrac}

\usepackage[textsize=tiny]{todonotes}

\usepackage{aliascnt}
\theoremstyle{plain}
\newtheorem{theorem}{Theorem}[section]

\newaliascnt{proposition}{theorem}

\aliascntresetthe{proposition}

\newaliascnt{lemma}{theorem}

\aliascntresetthe{lemma}

\newaliascnt{corollary}{theorem}

\aliascntresetthe{corollary}

\theoremstyle{definition}
\newaliascnt{definition}{theorem}

\aliascntresetthe{definition}

\newaliascnt{assumption}{theorem}
\newtheorem{assumption}[assumption]{Assumption}
\aliascntresetthe{assumption}

\theoremstyle{remark}
\newaliascnt{remark}{theorem}

\aliascntresetthe{remark}

\usepackage{enumitem}

\usepackage[capitalize,noabbrev]{cleveref}
\crefname{table}{Table}{Tables}
\Crefname{table}{Table}{Tables}

\crefname{theorem}{Theorem}{Theorems}
\Crefname{theorem}{Theorem}{Theorems}

\crefname{lemma}{Lemma}{Lemmas}
\Crefname{lemma}{Lemma}{Lemmas}

\crefname{proposition}{Proposition}{Propositions}
\Crefname{proposition}{Proposition}{Propositions}

\crefname{corollary}{Corollary}{Corollaries}
\Crefname{corollary}{Corollary}{Corollaries}

\crefname{definition}{Definition}{Definitions}
\Crefname{definition}{Definition}{Definitions}

\crefname{assumption}{Assumption}{Assumptions}
\Crefname{assumption}{Assumption}{Assumptions}

\crefname{remark}{Remark}{Remarks}
\Crefname{remark}{Remark}{Remarks}

\usepackage{xcolor}
\definecolor{tabblue}{RGB}{31,119,180}
\definecolor{tabred}{RGB}{214,39,40}
\definecolor{taborange}{RGB}{255,127,14}   
\definecolor{tabgreen}{RGB}{44,160,44} 
\usetikzlibrary{decorations.pathreplacing}

\newcommand{\nbone}{\ding{182}\xspace}
\newcommand{\nbtwo}{\ding{183}\xspace}
\newcommand{\nbthree}{\ding{184}\xspace}

\newcommand{\proximh}{Proximal-IMH}
\newcommand{\latentimh}{Latent-IMH}
\newcommand{\approximh}{Approx-IMH}

\newcommand{\ApproxMH}{{\texttt{\approximh}}\xspace}
\newcommand{\LatentMH}{{\texttt{\latentimh}}\xspace}
\newcommand{\ProxMH}{{\texttt{\proximh}}\xspace}

\newcommand{\Exact}{{\texttt{Exact posterior}}\xspace}
\newcommand{\Approx}{{\texttt{Approx posterior}}\xspace}
\newcommand{\Latent}{{\texttt{Latent posterior}}\xspace}
\newcommand{\Proximal}{{\texttt{Proximal posterior}}\xspace}

\newcommand{\piapprox}{\ensuremath{\pi_a}}
\newcommand{\pinew}{\ensuremath{\pi_l}}
\newcommand{\pilatent}{\ensuremath{\pi_l}}
\newcommand{\piprox}{\ensuremath{\pi_p}}

\newcommand{\KL}{{\mathrm{KL}\xspace}}

\newcommand{\KLapprox}{\mathbb{D}_a}

\newcommand{\KLlatent}{\mathbb{D}_l}
\newcommand{\KLprox}{\mathbb{D}_p}

\newcommand{\mixapprox}{\tau_{mix}^a}
\newcommand{\mixlatent}{\tau_{mix}^l}
\newcommand{\mixprox}{\tau_{mix}^p}

\renewcommand{\b}[1]{{\bf #1}}

\newcommand{\A}{\b{A}}
\newcommand{\J}{\b{J}}

\newcommand{\M}{\b{M}}

\newcommand{\V}{\b{V}}

\newcommand{\Sapprox}{\widetilde{\b{S}}}
\newcommand{\daapprox}{\mathbf{\Delta}_a}
\newcommand{\daprox}{\mathbf{\Delta}_p}

\newcommand{\xapprox}{\widetilde{x}}
\newcommand{\xnew}{x_l}
\newcommand{\wapprox}{w_a}
\newcommand{\wnew}{w_l}
\newcommand{\wprox}{w_p}

\newcommand{\w}{\b{w}}

\newcommand{\W}{\b{W}}

\newcommand{\Oo}{\b O}
\newcommand{\F}{\b F}
\newcommand{\K}{\b K}

\newcommand{\Id}{\b I}

\newcommand{\Fapprox}{\widetilde{\F}}

\newcommand{\Aapprox}{\widetilde{\A}}
\newcommand{\Anew}{\A_l}

\DeclareMathOperator{\Tr}{Tr}

\DeclareMathOperator*{\argmin}{arg\,min}

\definecolor{mygreen}{RGB}{0,153,0}
\definecolor{light-gray}{gray}{0.93}
\definecolor{mid-gray}{gray}{0.88}

\newcommand{\bSigma}{\bm{\Sigma}}

\makeatletter
\newcommand*{\rom}[1]{\expandafter\@slowromancap\romannumeral #1@}
\makeatother

\newcommand{\apu}[1]{\ensuremath{\widetilde{p}(#1)}}

\begin{document}
\raggedbottom

\twocolumn[
  \icmltitle{Proximal-IMH: Proximal Posterior Proposals for Independent Metropolis--Hastings with Approximate Operators}

  \begin{icmlauthorlist}
    \icmlauthor{Youguang Chen}{oden}
    \icmlauthor{George Biros}{oden}
  \end{icmlauthorlist}

  \icmlaffiliation{oden}{Oden Institute for Computational Engineering and Sciences, The University of Texas at Austin, Austin, Texas, USA}

  \icmlcorrespondingauthor{Youguang Chen}{youguang@utexas.edu}

  \icmlkeywords{Bayesian Inference, Inverse Problems, Markov Chain Monte Carlo, Independent Metropolis--Hastings, Approximate Operators}

  \vskip 0.3in
]



\printAffiliationsAndNotice{}  

\begin{abstract}
We are considering the problem of sampling from a posterior distribution related to Bayesian inverse problems arising in science, engineering, and imaging. Our method belongs to the family of  independence Metropolis–Hastings (IMH) sampling algorithms. These are quite common in Bayesian inference.  Relying on the existence of an approximate posterior distribution that is cheaper to  sample from but can have significant bias, we introduce Proximal-IMH, a scheme that removes this bias: it corrects samples from the approximate posterior   solving an auxiliary optimization problem, yielding a local adjustment that trades off adherence to the exact model against stability around the approximate reference point. For idealized settings, we prove  that  the proximal correction tightens the match between approximate and exact posteriors, and thereby improves acceptance rates and mixing. The new method works with both linear and nonlinear input-output operators and is especially suitable for inverse problems where exact posterior sampling is too expensive. We perform several numerical experiments that include multimodal and data-driven priors and  nonlinear input-output operators.  The results show that Proximal-IMH reliably outperforms existing IMH variants.
\end{abstract}

\section{Introduction}
We consider linear and nonlinear inverse problems of the form \begin{equation}\label{e:forward}
y = \Oo \F(x) + e = \A(x) + e,
\end{equation}
where $x \in \mathbb{R}^{d_x}$ denotes the unknown parameter vector with prior $p(x)$, $y \in \mathbb{R}^{d_y}$ is the noise-corrupted observation, and
$e$ is observational noise with known distribution $q(e)$. Here, $\Oo$ is a linear \emph{observation operator}, $\F(x)$ is a (possibly nonlinear) \emph{forward operator}, and $\A = \Oo \F$ is the associated \emph{input--output operator}. The operator $\F$ typically corresponds to the solution map of a differential or integral equation driven by the parameter $x$, and $u=\F(x)$ is a latent variable representing this solution. As an example from acoustics, $x$ may describe an unknown
scatterer, $u$ is the resulting acoustic field---defined everywhere; and $y$ consists of measurements collected at sensor locations. The Bayesian posterior distribution is given by $\pi(x\mid y) \propto q(y-\Oo \F(x))\, p(x)$.
Our goal is to design fast and scalable algorithms for efficiently generating samples from $\pi(x\mid y)$.

Inverse problems of the form \cref{e:forward} arise in a wide range of applications, including tomography~\citep{kaipio-e00,saratoon-arridge-e13,arridge-schotland99}, medical imaging~\citep{liang2020deep,epstein2007introduction},
geophysics~\citep{tarantola-05}, and many others~\citep{book-kaipio2006,vogel02,ghattas2021learning}.
In many such problems, evaluating the forward operator $\F$ is computationally expensive, but a cheaper approximation $\Fapprox$ is available such that
$\Fapprox(x) \approx \F(x)$.
Defining $\Aapprox=\Oo \Fapprox$, this naturally leads to the \emph{approximate posterior} $\piapprox(x\mid y) \propto q(y-\Aapprox(x))\, p(x)$. Examples of $\Fapprox$ include preconditioners, truncated iterative solvers,
coarse-grid discretizations, and learned surrogate or neural operators \citep{cohen2014solving,van2003iterative,boomeramg00,herrmann2024neural}.

A broad class of methods has been developed for sampling from $\pi(x\mid y)$, including Markov chain Monte Carlo (MCMC) methods such as Langevin methods~\citep{girolami2011riemann}, NUTS~\citep{hoffman2014no}, and Metropolis adjusted Langevin algorithm (MALA)~\citep{mala-roberts1996}, generative models~\citep{inverse-song-ermon21}, normalizing flows~\citep{papamakarios2021normalizing}, low-rank approximations~\citep{spantini2015optimal}, and many others~\citep{biegler2010large}.

Several methods seek to exploit the availability of an approximate forward operator $\Fapprox$ to reduce computational cost. Among them, two approaches are particularly relevant to this work: Latent-IMH and multifidelity sampling. Multifidelity sampling methods, including delay-acceptance and two-stage algorithms~\citep{2stage}, use $\Fapprox$ to screen or precondition proposals from a local MCMC kernel.
While this reduces the number of expensive forward evaluations, the resulting schemes remain inherently local, with mixing governed by the underlying proposal.
Latent-IMH~\citep{chen-biros26} constructs a global independence proposal that
preserves the exact likelihood while modifying the prior via an approximate operator, enabling efficient global moves. However, it requires transforming the forward operator into a square, invertible form and constructing an approximate distribution in a high-dimensional latent space, which can lead to ill-conditioning. We review Latent-IMH in \cref{sec:background} and include it as a baseline in our numerical experiments.

In this paper, we introduce an IMH sampler that exploits $\Fapprox$ and addresses key limitations of \LatentMH. Our contributions are summarized below.
\nbone
We introduce the \ProxMH sampling method for inverse problems of the form \cref{e:forward} that exploits
a computationally cheap approximate operator $\Fapprox$, and applies to both linear (\cref{sec:linear}) and nonlinear (\cref{sec:gn}) forward models.
\nbtwo
For linear forward models, we provide a theoretical analysis showing that \ProxMH achieves smaller expected KL divergence (\cref{sec:kl}) and faster mixing times (\cref{sec:mix-time}) than existing IMH proposals under suitable regularity assumptions.
\nbthree
We demonstrate the effectiveness of \ProxMH on a range of linear and nonlinear inverse problems, including bimodal targets, imaging problems with learned priors, and nonlinear acoustic scattering (\cref{sec:results}). Across all tests, \ProxMH consistently outperforms other IMH schemes in terms of acceptance rates, convergence speed, and sampling accuracy.

%

\section{Background}\label{sec:background}
We begin by defining the exact posterior and two approximate posteriors.
\begin{subequations}\label{e:posteriors}
\begin{align}
\texttt{Exact:}\quad
& \pi(x\mid y) \propto q(y-\A x)\, p(x),
\label{eq:exact-post} \\[2pt]
\texttt{Approx:}\quad
& \piapprox(x\mid y) \propto q(y-\Aapprox x)\, p(x),
\label{eq:approx-post} \\[2pt]
\texttt{Latent:}\quad
& \pilatent(x\mid y) \propto q(y-\A x)\, p(\Fapprox^{-1}\F x),
\label{eq:latent-post}
\end{align}
\end{subequations}
where the formulation for \Latent assumes that $\F$ and $\Fapprox$ are linear and invertible. The approximate distributions $\piapprox$ and $\pilatent$ can be used as proposal densities $g(x\mid y)$ in the independence Metropolis--Hastings (IMH) algorithm (\cref{algo:imh}).

\begin{algorithm}[t]
\small
\caption{Independence Metropolis-Hastings (IMH)}
\label{algo:imh}
\textbf{In:} $y$, number of MH steps $k$, proposal $g(x\mid y)$\\
\textbf{Out:} Samples $\{x_i\}_{i=1}^{k+1}$ from $\pi(x\mid y)$
\begin{algorithmic}[1]
 \STATE $x_1 \sim g(x\mid y)$
 \FOR{$t=1$ to $k$}
 \STATE $x^\prime \sim g(x\mid y)$
 \STATE $a(x^\prime,x_t) = \min\!\left\{1,
 \frac{\pi(x^\prime\mid y)}{\pi(x_t\mid y)}
 \frac{g(x_t\mid y)}{g(x^\prime\mid y)}\right\}$
 \STATE Accept $x^\prime$ with probability $a(x^\prime,x_t)$
 \STATE \textbf{otherwise} set $x_{t+1} \gets x_t$
 \ENDFOR
\end{algorithmic}
\end{algorithm}

A key advantage of IMH over local MCMC methods is that proposals are independent of the current state. Consequently, the computationally expensive computations involving the forward operator $\F$—including those required for proposal construction and the
corresponding terms in the acceptance ratio—can be precomputed in advance and evaluated in an embarrassingly parallel fashion. In practice, this can lead to substantial wall-clock savings for large-scale inverse problems. The efficiency of IMH is therefore governed by the acceptance ratio: for a fixed
accuracy target, lower acceptance rates require more Metropolis–Hastings steps and hence more evaluations of the exact forward operator, increasing the overall
computational cost. Using \Approx as the proposal is computationally attractive, but for Gaussian noise the log-acceptance ratio scales as $1/\sigma^2$, causing small operator errors in $\Aapprox$ to be strongly amplified and leading to poor acceptance rates in practice.

The \LatentMH method~\cite{chen-biros26} addresses this issue by preserving the exact likelihood
$q(y-\A x)$ in the proposal distribution.
This is achieved by sampling in the latent variable $u=\Fapprox x$.
Specifically, samples are first drawn from an approximate latent prior $\apu{u}$ constructed offline, then conditioned on $y$ using only the inexpensive observation operator $\Oo$, and finally mapped back via $x=\F^{-1}u$.
This procedure is equivalent to using \cref{eq:latent-post} as the IMH proposal.
While effective, \LatentMH requires squaring $\F$ to enforce invertibility, which can introduce ill-conditioning, and relies on constructing the approximate distribution $\apu{u}$.






\section{Proximal Independent Metropolis-Hastings}
The motivation for \ProxMH is to provide an alternative to \LatentMH that avoids
ill-conditioning induced by squaring $\F$ and bypasses sampling in the latent
variable $u$, which can be high-dimensional, nonlinear, or degenerate in practice.
\ProxMH instead proceeds in two steps:
\begin{subequations}\label{eq:proximal}
\begin{empheq}[left=\empheqlbrace]{align}
\xapprox &\sim \piapprox(x \mid y), \label{eq:proximal-a}\\
 x &= \argmin_{x} \|\A(x) - \Aapprox (\xapprox)\|^2 + \beta\| x -\xapprox\|^2. \label{eq:proximal-b}
\end{empheq}
\end{subequations}

\subsection{Linear Input-Output Operator $\A$}\label{sec:linear}
When the forward operator $\A$ is linear and full rank, the optimization problem \cref{eq:proximal-b} admits a closed-form solution:
\begin{align}\label{eq:sol-linear}
    x = \K \xapprox, \qquad \K = (\A^\top \A + \beta \b I)^{-1}(\A^\top \Aapprox + \beta \b I).
\end{align}
The resulting proposal distribution is the push-forward $\piprox(\cdot \mid y)=\K_{\#}\piapprox(\cdot \mid y)$ and admits the density
\begin{align}\label{eq:proximal-linear}
\piprox(x \mid y) \propto
q\!\left(y - \Aapprox \K^{-1} x\right)\,
p\!\left(\K^{-1} x\right).
\end{align}

Accordingly, the Metropolis--Hastings acceptance ratio takes the form
\begin{align}\label{eq:ar-linear}
\min\!\left\{
1,\;
\frac{q(y-\A x^\prime)}{q(y-\Aapprox \xapprox^\prime)}
\frac{q(y-\Aapprox \xapprox_t)}{q(y-\A x_t)}
\frac{p(x^\prime)}{p(\xapprox^\prime)}
\frac{p(\xapprox_t)}{p(x_t)}
\right\},
\end{align}
where $\xapprox_t, \xapprox^\prime \sim \piapprox(x \mid y)$,
$x_t = \K \xapprox_t$ is the current state, and
$x^\prime = \K \xapprox^\prime$ is the proposed state.

The operator $\K$ can be equivalently expressed as
\begin{align}\label{eq:K-linear}
\K = \mathbf{I} + \A^\dagger (\Aapprox-\A),
\end{align}
where $\A^\dagger = (\A^\top \A + \beta \mathbf{I})^{-1}\A^\top$ is a regularized pseudoinverse of $\A$.
%
Under this form of $\K$, the proposal mechanism of \ProxMH
(\cref{eq:proximal-a,eq:proximal-b}) can be interpreted as a correction of an approximate posterior sample $\xapprox \sim \piapprox(x \mid y)$ via
\begin{align}
    x \gets  \xapprox + \A^\dagger (\Aapprox \xapprox - \A \xapprox).
\end{align}
Here, the term $\Aapprox \xapprox - \A \xapprox$ represents a correction of the observation residual induced by the approximate forward model, while the operator $\A^\dagger$ maps this correction back into the parameter (i.e., $x$-) space.

\paragraph{Hyperparameter $\beta$.}
The regularization parameter $\beta$ controls the trade-off between enforcing consistency with the exact forward model and stabilizing the correction around the approximate sample $\xapprox$.
To balance these effects relative to the noise level, we choose $\beta = \Theta(\sigma^2)$, where $\sigma^2$ is the variance of the observational noise.
This scaling prevents overfitting to noise while ensuring that the correction remains effective in mitigating operator mismatch.
In particular, $\beta=\Theta(\sigma^2)$ is required in the KL-divergence analysis (Section~\ref{sec:kl}) and for the mixing-time bounds of \ProxMH (Theorem~\ref{thm:mixtime}). We further investigate the effect of $\beta$ in the numerical experiments.


\subsection{Comparison of Proposal Distributions}\label{sec:kl}
We compare the proposal distributions used in \ApproxMH, \LatentMH, and \ProxMH against \Exact using the expected Kullback--Leibler (KL) divergence, where the expectation is taken with respect to the observation $y$.  For \Approx, we define the expected KL divergence $\KLapprox$ as
\begin{align}\label{eq:kl-define}
    \KLapprox
    :=
    2\,\mathbb{E}_{y}\!\left[
        \KL\!\left(\pi_a(x \mid y)\,\|\,\pi(x \mid y)\right)
    \right].
\end{align}
The quantities $\KLlatent$ and $\KLprox$ are defined analogously for \Latent and \Proximal, respectively.

\begin{assumption}\label{assume:kl}
The prior is $p(x)=\mathcal{N}(\b 0,\b I)$ and the observational noise is
$q(e)=\mathcal{N}(\b 0,\sigma^2 \b I)$.
\end{assumption}
Under \cref{assume:kl}, and assuming that the forward operators $\A$ and $\Aapprox$ are full-rank, all posterior distributions appearing in our analysis are multivariate Gaussian (see \cref{tab:posterior-gaussian}).
\paragraph{Diagonal case.} We first consider a simplified setting in which  both $\F$ and  $\Fapprox$ are diagonal, with an identity observation operator $\Oo=[\Id\ \mathbf{0}]$. Closed-form expressions for the corresponding expected KL divergences are given below.
\begin{theorem}[Adapted from Proposition 3.3 in \cite{chen-biros26}]\label{thm:kl-simple}
Under \cref{assume:kl}, let $\F_{ii}=s_i$ and $\Fapprox_{ii}=\alpha_i s_i$ for $i\in[d]$. Define $\rho_i:=\nicefrac{\alpha_i^2 s_i^2+\sigma^2}{s_i^2+\sigma^2}$ and $\zeta_i:=\nicefrac{1}{(\alpha_i^2 s_i^2+\sigma^2)^2}$.
Setting $\beta=\sigma^2$ in the \texttt{Proximal} proposal, the expected KL divergences are given by
\begin{align}\label{eq:kl-expressions}
    \KLapprox&=-d_y + \sum_{i=1}^{d_y}\!\frac{1}{\rho_i}+\log\rho_i+\zeta_i (\alpha_i - 1)^2(\alpha_i s_i^2 - \sigma^2)^2\frac{s_i^2}{\sigma^2}, \nonumber\\
    \KLlatent&= -d + \sum_{i=1}^{d_y}\!\left(\frac{\alpha_i^2}{\rho_i}+\log\frac{\rho_i}{\alpha_i^2}\right)
+ \sum_{i=d_y+1}^{d}\left(\alpha_i^2+\log\frac{1}{\alpha_i^2}\right)\nonumber\\
  & \quad +\sum_{i = 1}^{d_y}  \zeta_i (\alpha_i^2 - 1)^2 s_i^2 \sigma^2. \nonumber\\
  \KLprox &= -d_y + \sum_{i=1}^{d_y}\!\left(\rho_i+\log\frac{1}{\rho_i}\right)+ \zeta_i (\alpha_i - 1)^2 s_i^2 \sigma^2.
\end{align}
\end{theorem}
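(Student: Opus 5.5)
Under \cref{assume:kl} with diagonal $\F,\Fapprox$ and $\Oo=[\Id\ \mathbf{0}]$, every posterior in \cref{e:posteriors} and the proximal proposal \cref{eq:proximal-linear} factorizes over coordinates. The plan is to reduce each expected KL divergence to a sum of one-dimensional Gaussian computations and evaluate these in closed form.

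For $i\le d_y$ the exact posterior in coordinate $i$ is Gaussian with precision $1+s_i^2/\sigma^2$, variance $\sigma^2/(s_i^2+\sigma^2)$, and mean $s_i y_i/(s_i^2+\sigma^2)$. The approximate posterior is the same with $s_i$ replaced by $\alpha_i s_i$. For $i>d_y$ both coincide with the prior. The latent posterior has prior factor $p(\Fapprox^{-1}\F x)$, i.e.\ prior variance $\alpha_i^2$ in every coordinate; this gives the extra sum over $i>d_y$. For the proximal proposal, $\K$ is diagonal with
\[
\K_{ii}=k_i=\frac{s_i\cdot\alpha_i s_i+\sigma^2}{s_i^2+\sigma^2}
\]
for $i\le d_y$ (using $\beta=\sigma^2$) and $k_i=1$ otherwise. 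Hence $\piprox$ is the pushforward of $\piapprox$: its mean is $k_i$ times the approximate mean and its variance is $k_i^2$ times the approximate variance.

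For one-dimensional Gaussians,
\[
2\,\KL\bigl(\mathcal N(m_1,v_1)\,\|\,\mathcal N(m_0,v_0)\bigr)=\frac{v_1}{v_0}-1+\log\frac{v_0}{v_1}+\frac{(m_1-m_0)^2}{v_0}.
\]
The variance terms are deterministic. Taking the ratio of approximate and exact variances in the stated order gives the $1/\rho_i+\log\rho_i$ form. For the latent proposal the variance is $\alpha_i^2\sigma^2/(\alpha_i^2s_i^2+\sigma^2)$, which yields $\alpha_i^2/\rho_i+\log(\rho_i/\alpha_i^2)$. For the proximal proposal I will check that $k_i^2\cdot\sigma^2/(\alpha_i^2s_i^2+\sigma^2)$ divided by $\sigma^2/(s_i^2+\sigma^2)$ simplifies to the claimed ratio, giving $\rho_i+\log(1/\rho_i)$. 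This is the algebraic point where $\beta=\sigma^2$ is essential. If the ratio does not come out as $\rho_i$, I will recheck $k_i$ and the KL direction convention.

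The mean terms are linear in $y_i$: $m_1-m_0=c_i y_i$. So the expectation over $y$ reduces to $c_i^2\,\mathbb E[y_i^2]/v_0$. Here $\mathbb E[y_i^2]$ is taken under the exact model marginal, $y_i\sim\mathcal N(0,s_i^2+\sigma^2)$, which cancels one factor of $s_i^2+\sigma^2$. The coefficients are as follows.
\begin{itemize}
\item Approximate proposal: $c_i=\alpha_i s_i/(\alpha_i^2s_i^2+\sigma^2)-s_i/(s_i^2+\sigma^2)$. Over a common denominator its numerator factors as $s_i(\alpha_i-1)(\sigma^2-\alpha_i s_i^2)$.
\item Latent proposal: $c_i=\alpha_i^2s_i/(\alpha_i^2s_i^2+\sigma^2)-s_i/(s_i^2+\sigma^2)$, with numerator $s_i\sigma^2(\alpha_i^2-1)$.
\item Proximal proposal: $c_i=k_i\,\alpha_i s_i/(\alpha_i^2s_i^2+\sigma^2)-s_i/(s_i^2+\sigma^2)$. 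Using the explicit $k_i$, its numerator should factor to $s_i\sigma^2(\alpha_i-1)$ times a power of $(s_i^2+\sigma^2)$.
\end{itemize}
Squaring each coefficient and multiplying by $(s_i^2+\sigma^2)^2/\sigma^2$ (from $\mathbb E[y_i^2]/v_0$) should produce $\zeta_i(\cdot)^2$ with the stated $s_i^2/\sigma^2$ or $s_i^2\sigma^2$ factor.

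The main obstacle is this bookkeeping of denominators in the mean terms, especially the proximal one. There the product $k_i\alpha_i s_i$ must be combined with the exact mean, and the factors of $(s_i^2+\sigma^2)$ must cancel exactly to leave $\zeta_i(\alpha_i-1)^2s_i^2\sigma^2$. I would carry out these factorizations symbolically, coordinate by coordinate, and then sum over $i$ (with the $-d_y$ or $-d$ constants) to assemble \cref{eq:kl-expressions}.
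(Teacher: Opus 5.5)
Your route is the paper's route. The paper writes $\KLprox$ as $\log\frac{|\bSigma|}{|\bSigma_p|}+\Tr(\bSigma^{-1}\bSigma_p)-d$ plus $\frac{1}{\sigma^2}\|\A\daprox\A\|_F^2+\sigma^2\|\daprox\|_F^2+2\|\daprox\A\|_F^2$ and then specializes to diagonals. Per coordinate this is exactly your $\frac{v_1}{v_0}-1+\log\frac{v_0}{v_1}+c_i^2(s_i^2+\sigma^2)^2/\sigma^2$. Your $\KLapprox$ and $\KLlatent$ computations (which the paper only cites) check out. So does the proximal mean term: with $\beta=\sigma^2$ the numerator is exactly $s_i\sigma^2(\alpha_i-1)$ over $(s_i^2+\sigma^2)(\alpha_i^2 s_i^2+\sigma^2)$, giving $\zeta_i(\alpha_i-1)^2s_i^2\sigma^2$. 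This mean term, not the covariance, is where $\beta=\sigma^2$ is actually used.

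The step you deferred fails, and your fallback will not repair it. Your $k_i=(\alpha_i s_i^2+\sigma^2)/(s_i^2+\sigma^2)$ is correct, and the resulting variance ratio is
\[
\frac{k_i^2\,\sigma^2/(\alpha_i^2 s_i^2+\sigma^2)}{\sigma^2/(s_i^2+\sigma^2)}
=\frac{(\alpha_i s_i^2+\sigma^2)^2}{(s_i^2+\sigma^2)(\alpha_i^2 s_i^2+\sigma^2)}=:\tilde\rho_i .
\]
This coincides with $\rho_i$ only when $\alpha_i=1$ (for $\alpha_i>0$). Reversing the KL direction does not produce $\rho_i$ either.

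Concretely, take $d=d_y=1$, $s=\sigma=\beta=1$, $\alpha=2$. Then $\K=3/2$, $\pi=\mathcal N(y/2,1/2)$ and $\piprox=\mathcal N(3y/5,9/20)$. This gives $\KLprox=0.9-1+\log(10/9)+0.04\approx 0.045$, while the stated formula gives $2.5-1-\log 2.5+0.04\approx 0.62$.

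The paper's proof reaches $\rho_i$ only through the assertion $(\bSigma_p)_{ii}=\sigma^2\rho_i/(s_i^2+\sigma^2)$. That is $\K_{ii}^2(\bSigma_a)_{ii}$ with $\K_{ii}$ replaced by $\rho_i$, i.e.\ with $\alpha_i^2 s_i^2$ in place of $\alpha_i s_i^2$ in $\A^\top\Aapprox$. This is inconsistent with the $\K_{ii}$ behind the paper's own (correct) $\daprox$.

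So the $\KLprox$ line cannot be proved as written. What your computation actually establishes is
\[
\KLprox=\sum_{i=1}^{d_y}\Bigl(\tilde\rho_i-1-\log\tilde\rho_i+\zeta_i(\alpha_i-1)^2 s_i^2\sigma^2\Bigr).
\]
Here $\tilde\rho_i\le 1$ by Cauchy--Schwarz, with equality iff $\alpha_i=1$. The covariance part is therefore $O(|\alpha_i-1|^4)$ rather than $O(|\alpha_i-1|^2)$, so the proximal proposal compares even more favorably than the stated formula suggests.
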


Let $\epsilon := |\alpha_i - 1|$ denote the perturbation magnitude. When $\epsilon$ is small, the leading-order behavior is given by
\begin{align}
\KLapprox &\sim \epsilon^2 d_y + \epsilon^2\sum_{i\in[d_y]}\frac{s_i^2}{\sigma^2},\quad\KLlatent \sim \epsilon^2 d + \epsilon^2\sum_{i\in[d_y]}\frac{\sigma^2}{s_i^2},\nonumber\\
\KLprox &\sim \epsilon^2 d_y + \epsilon^2\sum_{i\in[d_y]}\frac{\sigma^2}{s_i^2}.
\end{align}
In most applications of interest, $\sum_i s_i^2/\sigma^2 \gg \sum_i \sigma^2/s_i^2$, which implies that $\KLapprox$ typically dominates both $\KLlatent$ and $\KLprox$. Moreover, while $\KLlatent$ and $\KLprox$ exhibit a comparable mean mismatch, the covariance mismatch of $\KLprox$ is generally smaller, especially when $d\gg d_y$. 


\paragraph{General case.}
Beyond the diagonal case, we study the sensitivity of $\KLapprox$, $\KLlatent$, and $\KLprox$ in a more general setting. We consider $\F=\V \b S \V^\top$ and its approximation as $\Fapprox=\V \Sapprox \V^\top$, where $\V$ is a unitary matrix obtained from the singular value decomposition of a random matrix. The diagonal matrix $\b S$ has entries $\{1/i^2\}_{i\in[d]}$, while $\Sapprox$ has entries $\{\alpha_i/i^2\}_{i\in[d]}$. The observation operator $\b O$ is taken to be a full-rank random matrix.
We evaluate the expected KL divergence by varying four factors: the noise level, the magnitude of the operator perturbation, the observation ratio $(d_y/d)$, and the problem dimension. The experimental design for these sensitivity studies is summarized in \cref{tab:kl-experiments}, and the corresponding results are reported in \cref{fig:kl}. In all regimes tested, $\KLprox$ consistently achieves substantially lower values—often by orders of magnitude—than both $\KLapprox$ and $\KLlatent$.

\begin{figure}[t]
\scriptsize
\centering
\begin{tikzpicture}
\tikzset{
  plottitle/.style={
    rotate=90,
    anchor=center,
    fill=gray!20,
    draw=gray!50,
    rounded corners=2pt,
    inner sep=2pt,
    font=\scriptsize\bfseries,
    align=center
  },
  legendbox/.style={
    fill=white,
    draw=black!30,
    rounded corners=2pt,
    inner sep=1pt
  },
  legendline/.style={
    line width=1.2pt
  }
}
\node[inner sep=0pt] (a)
  {\includegraphics[width=0.2\textwidth]{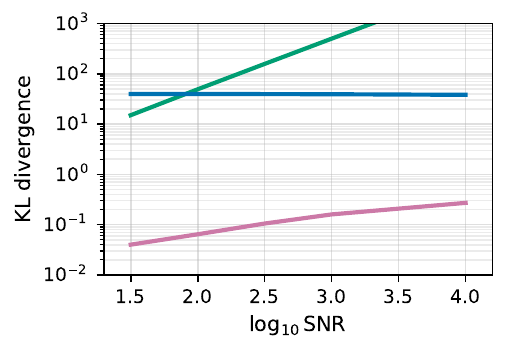}};
\node[plottitle] at ([xshift=-2mm,yshift=2mm]a.west) {Noise level};

\node[inner sep=0pt, right=0.03\textwidth of a] (b)
  {\includegraphics[width=0.2\textwidth]{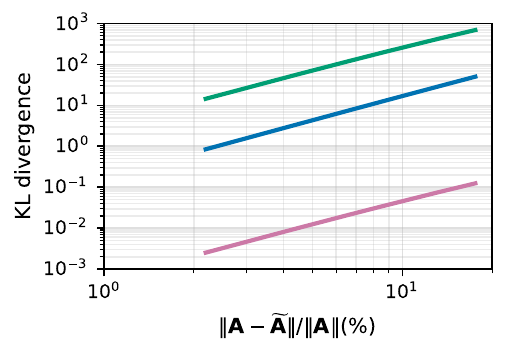}};
  \node[plottitle] at ([xshift=-2mm,yshift=2mm]b.west) {Operator error};
\node[inner sep=0pt, below=0mm of a] (c)
  {\includegraphics[width=0.2\textwidth]{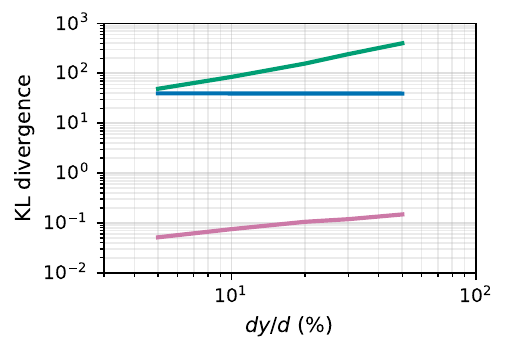}};
  \node[plottitle] at ([xshift=-2mm,yshift=2mm]c.west) {Observation ratio};
\node[inner sep=0pt, below=0mm of b] (d)
  {\includegraphics[width=0.2\textwidth]{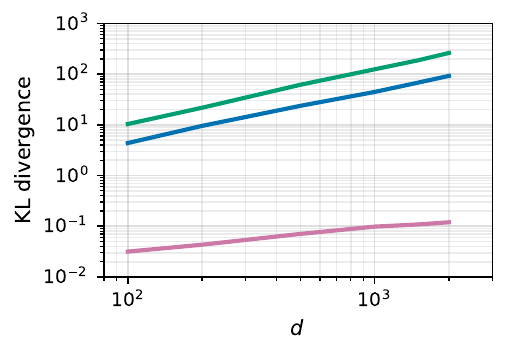}};
    \node[plottitle] at ([xshift=-2mm,yshift=2mm]d.west) {Dimension};
\node[legendbox, anchor=south] at ($(a.north)!0.5!(b.north) + (0,0mm)$) {%
\begin{tikzpicture}[baseline, font=\scriptsize]
  \draw[legendline, color={rgb,255:red,0;green,158;blue,115}] (0,0) -- (5mm,0);
  \node[right=1.3mm] at (5mm,0) {\texttt{Approx} ($\mathbb{D}_a$)};

  \draw[legendline, color={rgb,255:red,0;green,114;blue,178}] (24mm,0) -- (29mm,0);
  \node[right=1.3mm] at (29mm,0) {\texttt{Latent} ($\mathbb{D}_l$)};

  \draw[legendline, color={rgb,255:red,204;green,121;blue,167}] (48mm,0) -- (53mm,0);
  \node[right=1.3mm] at (53mm,0) {\texttt{Proximal} ($\mathbb{D}_p$)};
\end{tikzpicture}%
};
\end{tikzpicture}
\caption{Sensitivity of the expected KL divergence of the \texttt{Approx}, \texttt{Latent}, and \texttt{Proximal} proposals (relative to the \Exact) for different noise levels, operator errors, observation ratios, and dimensions. Details of the experimental setup are given in \cref{tab:kl-experiments}.}
\label{fig:kl}
\end{figure}

\subsection{Mixing Time of IMH Schemes}\label{sec:mix-time}
We now compare the convergence behavior of different IMH methods in terms of mixing time. The mixing time of the \ApproxMH chain under a warm start $\pi_a(x \mid y)$ is defined as
\begin{align}
\mixapprox(\epsilon)
:= \inf\left\{ n \in \mathbb{N} :
\left\| \pi_a P_a^n - \pi \right\|_{\mathrm{TV}} \le \epsilon \right\},
\end{align}
where $P_a$ denotes the IMH transition kernel with the proposal distribution
$\pi_a(x \mid y)$, and $\|\cdot\|_{\mathrm{TV}}$ denotes the total variation distance.
Analogously, we define the mixing times for \LatentMH and \ProxMH as $\mixlatent(\epsilon)$ and $\mixprox(\epsilon)$, respectively. The theorem below establishes explicit mixing-time bounds for the three IMH schemes under the assumption of a linear forward model and a log-concave prior.
\begin{theorem}[Mixing time for three IMH schemes.]\label{thm:mixtime}
Assume that the prior density $p(x)$ is log-concave and that there exist constants
$L,m>0$ such that $-\log p(x)$ is $L$-smooth and
$m$-strongly convex on $\mathbb{R}^d$. 
Assume further that the noise density $q(e)$ is Gaussian with variance $\sigma^2$, and that the forward operator $\F$ and its approximation $\Fapprox$
are linear and invertible (as required by the \LatentMH proposal). In \ProxMH, set the regularization parameter $\beta = \Theta(\sigma^2)$.
Let $\Delta \A := \Aapprox - \A$.
Then the mixing times of \ApproxMH, \LatentMH, and \ProxMH satisfy
\begin{align}
\mixapprox(\epsilon)
&\;\sim\;
\frac{d}{m^2}\,
\frac{\|\A^\top \Delta \A\|^2}{\sigma^4}\,
\log(1/\epsilon),\label{eq:mix-approx}\\
\mixlatent(\epsilon)
&\;\sim\;
\frac{d L^2}{m^2}\,
\left\| \mathbf{I} - \Fapprox^{-1} \F \right\|^2
\log(1/\epsilon),\label{eq:mix-latent}\\
\mixprox(\epsilon)
&\;\sim\;
\frac{d L^2}{m^2}\,
\left\| \mathbf{I} - \K^{-1} \right\|^2
\log(1/\epsilon).\label{eq:mix-prox}
\end{align}
\end{theorem}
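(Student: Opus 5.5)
We will derive all three bounds from one generic mixing-time estimate for independence samplers with a log-concave target and a proposal that differs from it by a log-ratio with controlled gradient. The tool is the standard bound from the IMH literature (e.g.\ the analysis of IMH with log-concave targets under a warm start, as in the source of \cite{chen-biros26}). Write the target as $\pi(x\mid y)\propto e^{-U(x)}$ and the proposal as $g(x\mid y)\propto e^{-U(x)-h(x)}$. If $U$ is $m'$-strongly convex and $h$ is $\ell$-Lipschitz on the region carrying most of the mass, then the acceptance probability is bounded below. More precisely, the conductance (or spectral gap) of the IMH kernel is at least of order $m'^2/(d\,\ell^2)$ up to constants, and with a warm start this yields
\[
\tau_{mix}(\epsilon)\sim \frac{d\,\ell^2}{m'^2}\log(1/\epsilon).
\]
In our setting $U(x)=-\log p(x)+\|y-\A x\|^2/(2\sigma^2)$ is at least $m$-strongly convex, so $m'\ge m$. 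The proof therefore reduces to bounding the gradient of $h=\log(\pi/g)$, up to normalizing constants, for each scheme.

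\textbf{Approx.} Here $h(x)=\big(\|y-\Aapprox x\|^2-\|y-\A x\|^2\big)/(2\sigma^2)$, so
\[
\nabla h = \frac{1}{\sigma^2}\big(\Aapprox^\top(\Aapprox x-y)-\A^\top(\A x-y)\big).
\]
Linearizing in $\Delta\A$, the dominant term is $\A^\top\Delta\A\,x/\sigma^2$ plus terms involving the residual, which are controlled in the posterior concentration region. This gives an effective Lipschitz constant $\|\A^\top\Delta\A\|/\sigma^2$ and hence \cref{eq:mix-approx}.

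\textbf{Latent.} The likelihood cancels exactly, and $h(x)=\log p(\Fapprox^{-1}\F x)-\log p(x)$. We will write
\[
\nabla h = (\Fapprox^{-1}\F)^\top\nabla\log p(\Fapprox^{-1}\F x)-\nabla\log p(x).
\]
Then we add and subtract $\nabla\log p(\Fapprox^{-1}\F x)$ and use $L$-smoothness. This bounds $\|\nabla h\|$ by a constant times $L\|\Id-\Fapprox^{-1}\F\|$ times $(\|x\|+1)$-type factors, which are order one under the posterior up to the $d$ already accounted for. Squaring gives \cref{eq:mix-latent}.

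\textbf{Proximal.} We work with the density \cref{eq:proximal-linear}, namely $\piprox\propto q(y-\Aapprox\K^{-1}x)\,p(\K^{-1}x)$. The log-ratio splits into a prior part, $\log p(\K^{-1}x)-\log p(x)$, and a likelihood part.

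The prior part is handled exactly as in the Latent case with $\Fapprox^{-1}\F$ replaced by $\K^{-1}$, giving the factor $L\|\Id-\K^{-1}\|$.

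For the likelihood part, we must show that $\Aapprox\K^{-1}-\A$ is small relative to $\sigma$. Using \cref{eq:sol-linear},
\[
(\A^\top\A+\beta\Id)\K=\A^\top\Aapprox+\beta\Id .
\]
This rearranges to
\[
\A^\top(\Aapprox-\A\K)=\beta(\K-\Id),
\]
and hence
\[
\Aapprox\K^{-1}-\A=(\Aapprox-\A\K)\K^{-1}.
\]
Its contribution to $\nabla h$ is $\sigma^{-2}$ times terms like $\A^\top(\Aapprox\K^{-1}-\A)$, which by the identity above equal $\beta\,(\K-\Id)\K^{-1}/\sigma^{2}$ up to a symmetrization. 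With $\beta=\Theta(\sigma^2)$ this is $\Theta(\|\Id-\K^{-1}\|)$. This is the key point: the $1/\sigma^2$ amplification seen in the Approx case disappears. Combining the two parts gives $\ell\lesssim L\|\Id-\K^{-1}\|$ (using $L\ge m$ is order one relative to the $\beta/\sigma^2$ constant), and therefore \cref{eq:mix-prox}.

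\textbf{Main obstacle.} I expect the hardest step to be the Proximal likelihood term. The identity above controls $\A^\top(\Aapprox\K^{-1}-\A)$ only after multiplication by $\A^\top$, whereas the gradient also involves the cross term with the data residual $y-\A x$. My first attempt will be to bound that cross term by noting that, in the posterior concentration region, the residual has norm of order $\sigma\sqrt{d_y}$. A second point to manage is that the generic IMH bound needs Lipschitzness of $h$ only on a high-probability ball, not globally. I would handle this with the usual truncation argument at radius of order $\sqrt{d/m}$, which is the source of the factor $d$.
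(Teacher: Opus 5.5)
Your architecture matches the paper's. You reduce, via the generic IMH bound (\cref{thm:mix-general}), to local Lipschitz constants of the log-weights on a ball of radius $R\sim\sqrt{d/m}$. You treat \ApproxMH through the quadratic form, \LatentMH by add-and-subtract plus $L$-smoothness, and \ProxMH by splitting into a prior part (the \LatentMH argument with $\K^{-1}$) and a likelihood part controlled by the normal equations. Your identity is the paper's optimality condition $\A^\top r=-\beta(x-\xapprox)$ in disguise: with $\mathbf{E}:=\Aapprox\K^{-1}-\A$ one has $r=\A x-\Aapprox\xapprox=-\mathbf{E}x$ and $\A^\top\mathbf{E}=\beta(\Id-\K^{-1})$.

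The gap is the step you flag as the main obstacle. The likelihood gradient is $\sigma^{-2}\bigl[\A^\top\mathbf{E}x+\mathbf{E}^\top(\A x-y)+\mathbf{E}^\top\mathbf{E}x\bigr]$. Bounding the middle term by $\|\mathbf{E}\|$ times a residual of size $\sigma\sqrt{d_y}$ does not give the stated scaling, for two reasons.
\begin{enumerate}
\item The generic bound needs Lipschitzness on the whole isotropic ball, where $\|\A x-y\|$ reaches order $\|\A\|R$. The size $\sigma\sqrt{d_y}$ is only posterior-typical, and you cannot use both at once.
\item Your identity controls only $\A^\top\mathbf{E}$, so it yields only $\|\mathbf{E}\|\le\beta\|\Id-\K^{-1}\|/\sigma_{\min}(\A)$. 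This brings back a factor $\sigma\sqrt{d_y}/\sigma_{\min}(\A)$ (with the posterior-typical residual) or $\kappa(\A)$ (on the ball). That is exactly the conditioning/SNR dependence the $\|\Id-\K^{-1}\|$ bound is meant to remove.
\end{enumerate}

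The missing idea is that $y$ itself lies, up to noise, in the range of $\A$. Writing $y=\A x_0+e$,
\[
\mathbf{E}^\top(\A x-y)=(\A^\top\mathbf{E})^\top(x-x_0)-\mathbf{E}^\top e=\beta\,(\Id-\K^{-1})^\top(x-x_0)-\mathbf{E}^\top e .
\]
So the cross term is handled by the same identity and contributes order $\frac{\beta}{\sigma^2}\|\Id-\K^{-1}\|R=\Theta(\|\Id-\K^{-1}\|R)$. This is precisely the paper's step $y^\top r=x_0^\top\A^\top r=-\beta x_0^\top(x-\xapprox)$. The paper then neglects $e$ as part of the scaling argument, so only the leftover $\sigma^{-2}\mathbf{E}^\top e$ would ever need a residual-size bound.

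The same substitution makes your \ApproxMH ``residual terms'' precise. Since $\Delta\A^\top(\A x-y)=\Delta\A^\top\A(x-x_0)-\Delta\A^\top e$, that term is controlled by $\|\A^\top\Delta\A\|$, which matches the statement; the paper's proof only obtains $\|\A\|\|\Delta\A\|$.

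Two smaller points remain. First, the quadratic term $\sigma^{-2}\mathbf{E}^\top\mathbf{E}x$ still has to be shown to be lower order. The paper's analogue, $\xapprox^\top(\Delta\A)^\top(\A\A^\dagger-\Id)\Delta\A\,\xapprox$, is simply absorbed into $\lesssim$. Second, your generic bound is misphrased: for an $\ell$-Lipschitz log-weight it gives $\ell^2/m$. The factor $d/m^2$ arises only because the Lipschitz constant on the ball is $\ell R$, with $\ell$ bounding the growth of $\nabla h$, which is how you actually use it.
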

\begin{proof}[Proof sketch]
The result follows from the general mixing-time bound in \cref{thm:mix-general},
which reduces the analysis to bounding local Lipschitz constants of the log-weight
functions associated with each IMH proposal. The detailed proof is provided in \cref{sec:proof-mix-time}.
For \ApproxMH, the log-weight admits a quadratic form involving the operator mismatch $\Delta \A$, resulting in a Lipschitz constant proportional to $\|\A^\top \Delta \A\|/\sigma^2$. 
For \LatentMH, smoothness of the negative log-prior implies that the Lipschitz
constant is controlled by $\|\mathbf{I}-\Fapprox^{-1}\F\|$.
For \ProxMH, the correction operator $\K^{-1}$ introduces additional regularization,
and choosing $\beta=\Theta(\sigma^2)$ ensures dependence on $\|\mathbf{I}-\K^{-1}\|$.
Substituting these bounds into \cref{thm:mix-general} yields the stated scaling of
the mixing times.
\end{proof}
\vspace{-0.6em}

As shown in \cref{thm:mixtime}, the mixing time of \ApproxMH scales unfavorably with the signal-to-noise ratio $\|\A\|/\sigma^2$, which is typically large in inverse problems. Both \LatentMH and \ProxMH remove this explicit dependence, but differ in how operator
errors are amplified.

For \LatentMH, the term $\|\mathbf{I}-\Fapprox^{-1}\F\|$ depends on the full perturbation
$\Delta\F=\Fapprox-\F$ and is amplified by the conditioning of $\Fapprox$ through
$\|\Fapprox^{-1}\|$, which can become large even for small $\|\Delta\F\|$. In contrast, \ProxMH depends only on the observable perturbation $\Delta\A=\mathbf{O}\Delta\F$, filtered through the regularized pseudoinverse $\A^\dagger$ in $\K=\mathbf{I}+\A^\dagger(\Aapprox-\A)$. As a result, the mixing behavior of \ProxMH is more robust to operator perturbations.

A sharper comparison can be obtained in the diagonal setting introduced in the previous section for the KL-divergence comparison. In that case, one can show that
\begin{align}
\|\mathbf{I}-\K^{-1}\|\leq
\|\mathbf{I}-\Fapprox^{-1}\F\|,
\end{align}
with equality attained only when  $\F=\Fapprox$.


\subsection{Nonlinear Operator}\label{sec:gn}
We now extend \ProxMH to nonlinear forward problems. Recall the optimization problem \cref{eq:proximal-b}: given a sample $\xapprox \sim \piapprox(x \mid y)$ from the approximate posterior, it is expected that the proposal $x$ will remain close to $\xapprox$ while reducing the data misfit.
In the nonlinear least-squares setting, Gauss--Newton methods are known to achieve local quadratic convergence when the residual is small, making them a natural choice for this correction step.

Specifically, we consider a single Gauss--Newton step applied to \cref{eq:proximal-b},
\begin{align}\label{eq:gn}
x \gets GN(\xapprox)
:= \xapprox -
\left(\J(\xapprox)^\top \J(\xapprox) + \beta \mathbf{I}\right)^{-1}
\J(\xapprox)^\top r(\xapprox),
\end{align}
where $r(\xapprox) = \A(\xapprox) - \Aapprox(\xapprox)$ denotes the residual and $\J(\xapprox)$ is the Jacobian of $\A(\cdot)$ evaluated at $\xapprox$. When $\A$ is linear, a single Gauss--Newton step recovers the exact solution given in \cref{eq:sol-linear}.

Under this construction, the \ProxMH proposal induced by one
Gauss--Newton step can be expressed as a pushforward distribution,
\begin{align}\label{eq:proximal-nonlinear}
\piprox(x \mid y) = \piapprox(\xapprox \mid y)\,|\det \J_{GN}(\xapprox)|^{-1},
\end{align}
where $\xapprox = GN^{-1}(x)$ and $\J_{GN}(\xapprox)$ denotes the Jacobian of the Gauss--Newton map $GN(\cdot)$ at $\xapprox$. The corresponding Metropolis--Hastings acceptance ratio is
\begin{align}\label{eq:ar-nonlinear}
\min\!\left\{ 1,\;
\frac{\pi(x^\prime \mid y)}{\pi(x_t \mid y)}
\frac{\piapprox(\xapprox_t \mid y)}{\piapprox(\xapprox^\prime \mid y)}
{\color{tabblue}\frac{|\det \J_{GN}(\xapprox^\prime)|}
     {|\det \J_{GN}(\xapprox_t)|}}
\right\},
\end{align}
where $\xapprox_t, \xapprox^\prime \sim \piapprox(x \mid y)$ and $x_t = GN(\xapprox_t)$ denote the current state, $x^\prime = GN(\xapprox^\prime)$ is the proposed state.

Compared with the linear acceptance ratio in \cref{eq:ar-linear}, the only additional term is the Jacobian determinant ratio. Since the Gauss--Newton update \cref{eq:gn} represents a small, local correction around $\xapprox$, we expect this determinant ratio to be close to one in practice and thus to have a negligible impact on the acceptance ratio.

To illustrate this setting concretely, we consider a nonlinear model $y = \A G(x) + e$, where $\A$ is a full-rank matrix and
$G$ is a nonlinear mapping. We assume a Gaussian prior on $x$ and additive Gaussian noise $e \sim \mathcal{N}(0,\sigma^2 \mathbf{I})$.  This model is standard in inverse problems with deep generative model--informed priors, including generative adversarial networks (GANs), variational autoencoders (VAEs), and normalizing flows, where $x$ represents the latent variable and $G$ corresponds to the decoder or generator implemented by a neural network. 

Consider a small perturbation $\Delta \A = \Aapprox - \A= \delta \A$ for some $\delta \in (0,1)$. For arbitrary points $\xapprox_1$ and $\xapprox_2$, we have
\begin{align}\label{eq:log-det-ratio}
\log
\frac{|\det \J_{GN}(\xapprox_1)|}
     {|\det \J_{GN}(\xapprox_2)|}
=
\sum_{i\in[d_x]}
\log
\frac{1 - \delta \lambda_{1,i}/(\lambda_{1,i} + \beta)}
     {1 - \delta \lambda_{2,i}/(\lambda_{2,i} + \beta)},
\end{align}
where $\{\lambda_{1,i}\}$ and $\{\lambda_{2,i}\}$
denote the eigenvalues (in descending order) of
$\J^\top(\xapprox_1)\J(\xapprox_1)$ and
$\J^\top(\xapprox_2)\J(\xapprox_2)$, respectively. Since $\beta = \Theta(\sigma^2)$, when
$\lambda_{1,i}, \lambda_{2,i} \gg \beta$ the corresponding terms are negligible.
When $\lambda_{1,i}$ and $\lambda_{2,i}$ are comparable to $\beta$, the log ratio is determined by their difference but remains uniformly bounded by $\log(1-\delta)^{-1}$.

In our nonlinear implementation, we therefore neglect the Jacobian determinant ratio when computing the acceptance probability. We further examine the validity of this approximation in the numerical experiments. For specific applications, however, we recommend monitoring the scale of
this ratio to determine whether its contribution can be safely ignored.

\section{Numerical Experiments}\label{sec:results}

We design numerical experiments to compare \ProxMH with existing IMH schemes, to evaluate its nonlinear extension, and to assess consistency with theory.
Performance is measured using acceptance rates, relative mean error, and relative componentwise second-moment error. In all experiments, the observational noise is Gaussian and the noise-to-signal ratio $\|e\|/\|y\|$ is controlled to be between 15\% and 20\%. 
We additionally include the No-U-Turn Sampler (NUTS) as a baseline to provide a reference for sampling quality and computational efficiency. Because NUTS requires repeated evaluations of the exact forward operator, it is used primarily as a benchmark and is not considered a directly comparable low-cost alternative. 
We present a linear forward model test in \cref{sec:2well} and two nonlinear forward model tests in \cref{sec:mnist,sec:test-nonlinear}.


\begin{figure*}[t]
\scriptsize
\centering
\begin{tikzpicture}

\def\w{0.20\textwidth}        
\def\xgap{0\textwidth}     
\def\ygap{0.14\textwidth}     

\def\xA{0}
\def\xB{\w+\xgap}
\def\xC{2*\w+2*\xgap}
\def\xD{3*\w+3*\xgap}

\def\yone{0}
\def\ytwo{-\ygap}
\def\ythree{-2*\ygap}

\node[anchor=west] at (-3, \yone) {\textbf{Test I}};
\node[anchor=west] at (-3, \ytwo) {\textbf{Test II}};
\node[anchor=west] at (-3, \ythree) {\textbf{Test III}};

\node[inner sep=0pt] (a11) at (\xA,\yone) {\includegraphics[width=\w]{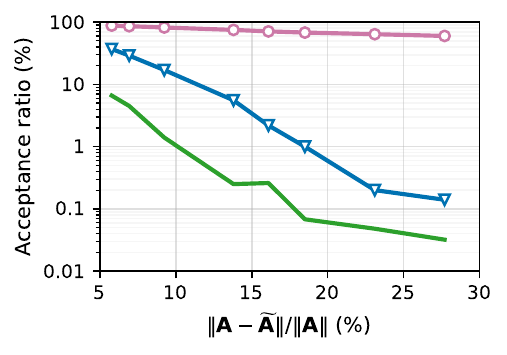}};
\node[inner sep=0pt] (a12) at (\xB,\yone) {\includegraphics[width=\w]{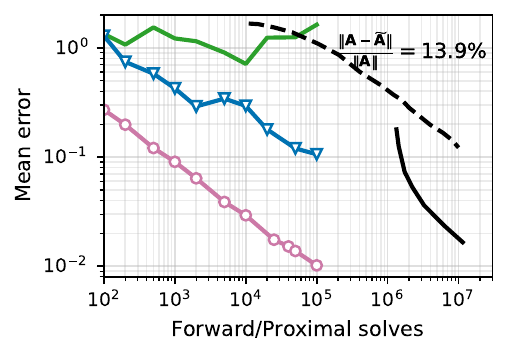}};
\node[inner sep=0pt] (a13) at (\xC,\yone) {\includegraphics[width=\w]{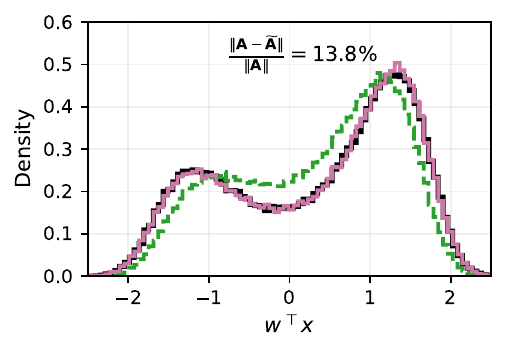}};
\node[inner sep=0pt] (a14) at (\xD,\yone) {\includegraphics[width=\w]{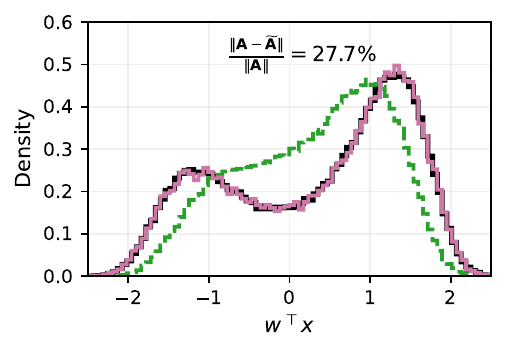}};

\node[inner sep=0pt] (a21) at (\xA,\ytwo) {\includegraphics[width=\w]{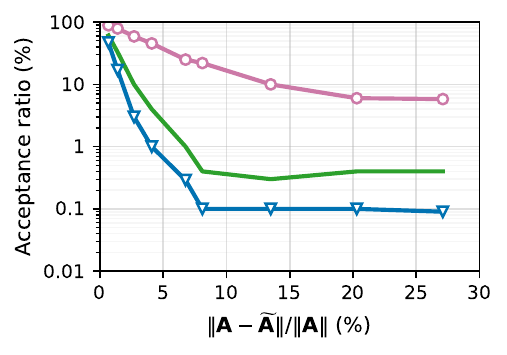}};
\node[inner sep=0pt] (a22) at (\xB,\ytwo) {\includegraphics[width=\w]{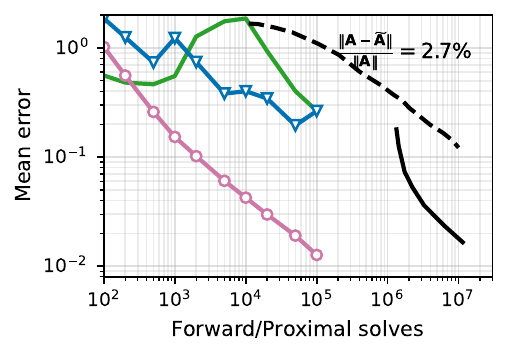}};
\node[inner sep=0pt] (a23) at (\xC,\ytwo) {\includegraphics[width=\w]{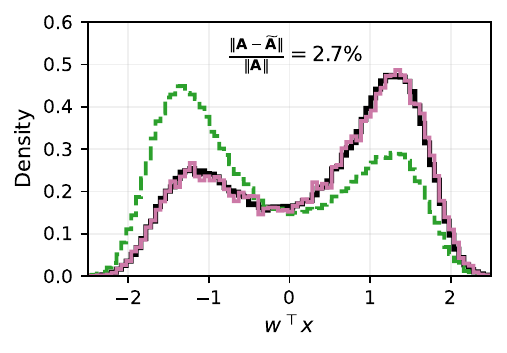}};
\node[inner sep=0pt] (a24) at (\xD,\ytwo) {\includegraphics[width=\w]{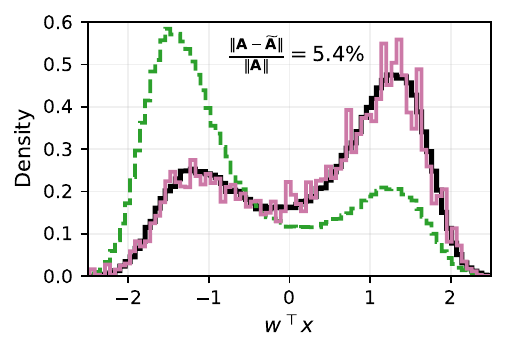}};

\node[inner sep=0pt] (a31) at (\xA,\ythree) {\includegraphics[width=\w]{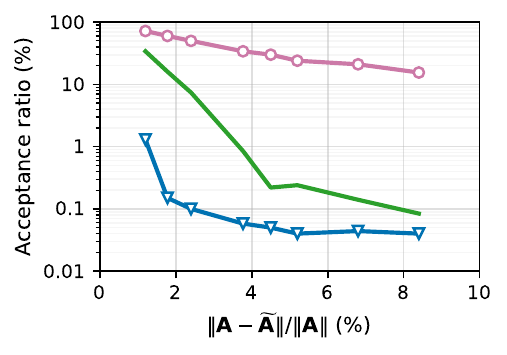}};
\node[inner sep=0pt] (a32) at (\xB,\ythree) {\includegraphics[width=\w]{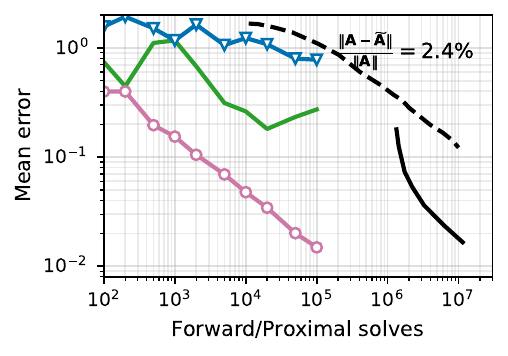}};
\node[inner sep=0pt] (a33) at (\xC,\ythree) {\includegraphics[width=\w]{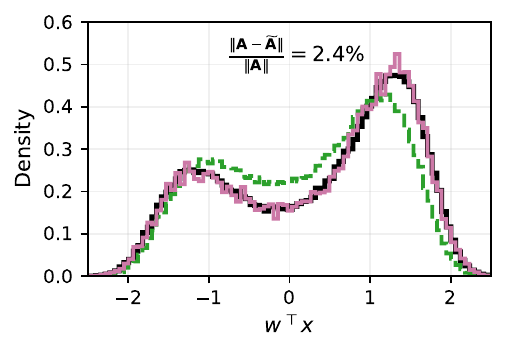}};
\node[inner sep=0pt] (a34) at (\xD,\ythree) {\includegraphics[width=\w]{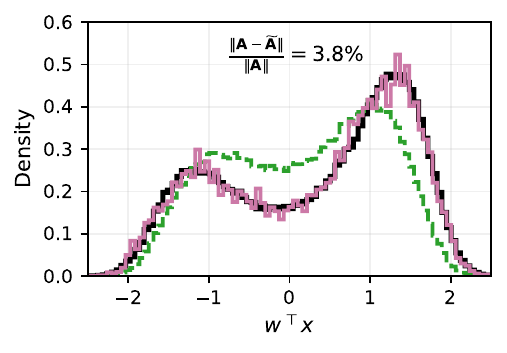}};

\node[inner sep=0pt, anchor=south west]
  at ([xshift = 5mm, yshift=-1mm] a11.north west)
  {\includegraphics[width=0.35\textwidth]{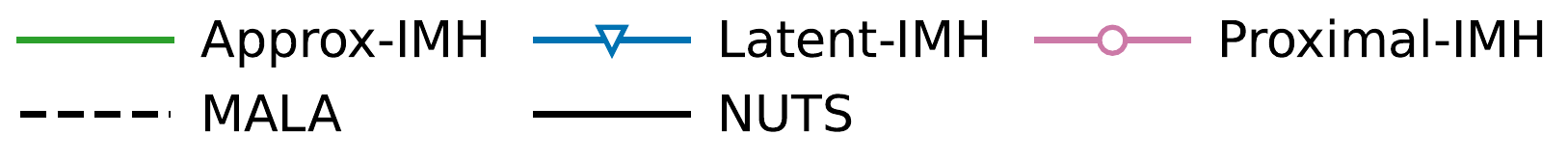}};

\node[inner sep=0pt, anchor=south east]
  at ([yshift=1mm] a14.north east)
  {\includegraphics[width=0.38\textwidth]{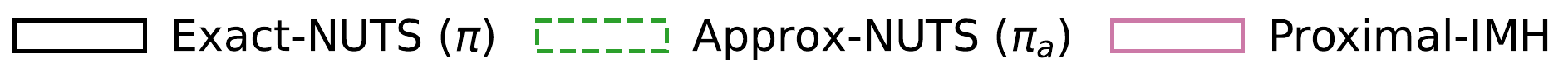}};
\end{tikzpicture}
\caption{Comparison of sampling performance in the bimodal test.
Rows correspond to Tests~I--III. From left to right, panels show Metropolis--Hastings acceptance rates for three IMH methods, convergence of the relative mean error for the IMH methods together
with NUTS and MALA, and histograms of the projected samples $w^\top x$ for two levels of operator error.
In the histogram panels, Approx-NUTS denotes samples drawn from the \Approx $\pi_a(x\mid y)$.
For \ProxMH, we set $\beta=\sigma^2$. Results in the left panel are averaged over 5 independent trials for each
method.}
\label{fig:2well-converge}
\end{figure*}

\subsection{Bimodal Test.}\label{sec:2well}
We consider the following prior distribution
\begin{align}\label{eq:prior-bimodal}
    p(x) \propto \exp\!\left(
    -\frac{1}{2}\|x\|^2
    - \tau (w^\top x - c)^2 (w^\top x + c)^2
    \right),
\end{align}
where $x \in \mathbb{R}^{200}$, $w \in \mathbb{R}^{200}$ satisfies $\|w\|=1$, and we set $c=2$ and $\tau=0.3$. This prior is bimodal along the projection direction $w^\top x$. We visualize the marginal prior distribution of $w^\top x$ in \cref{fig:2well-prior}.

We construct $\A=\mathbf{O}\mathbf{F}$ with
$\mathbf{F}=\mathbf{V}\mathbf{S}\mathbf{V}^\top$, where $\mathbf{V}$ is orthogonal, $\b S$ is diagonal with $\mathbf{S}_{ii}=1/i$, $d_y=50$, and $\mathbf{O}\in\mathbb{R}^{d_y\times d_x}$ is
well conditioned.

We consider three different constructions of the approximate forward operator $\Fapprox$, with $\Aapprox = \b O \Fapprox$:
\begin{itemize}[leftmargin=*,itemsep=0pt, topsep=2pt]
\item \textbf{Test I (Multiplicative spectral perturbation).}
 $\Fapprox = \b V \Sapprox \b V^\top$, where $\Sapprox$ is diagonal with entries $\Sapprox_{ii} = \alpha_i \b S_{ii}$, $\alpha_i \sim \mathcal U[\alpha_{-}, \alpha_{+}]$,  with $\alpha_{-} \in (0,1)$ and $\alpha_{+} > 1$. 
\item \textbf{Test II (Additive low-rank perturbation).}
We define $\Fapprox = \F + \epsilon \b U_1 \b U_2^\top$,
where $\b U_1, \b U_2 \in \mathbb{R}^{d_x \times 5}$ have i.i.d.\ standard normal entries.
\item \textbf{Test III (Low-rank approximation).}
We define $\Fapprox = \b V \Sapprox \b V^\top$, where $\Sapprox$ is obtained by setting $\Sapprox_{ii} = \b S_{ii}$ if $\b S_{ii} > \alpha$ and
$\Sapprox_{ii} = 0$ otherwise, for a threshold $\alpha>0$.
\end{itemize}

\begin{figure}[t]
\centering
\begin{tikzpicture}
\tikzset{
  legendbox/.style={
    fill=white,
    draw=black!30,
    rounded corners=2pt,
    inner sep=1pt
  },
  legendline/.style={
    line width=1.2pt
  },
  testlabel/.style={
    font=\scriptsize\bfseries,
    anchor=south east,
    fill=white,
    fill opacity=0.75,
    text opacity=1,
    inner sep=1pt,
    rounded corners=1pt
  }
}

\node[inner sep=0pt] (a)
  {\includegraphics[width=0.33\columnwidth]{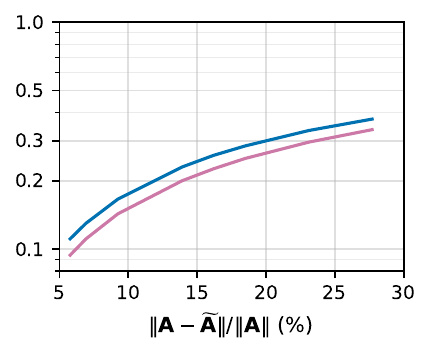}};
\node[inner sep=0pt, right=0pt of a] (b)
  {\includegraphics[width=0.33\columnwidth]{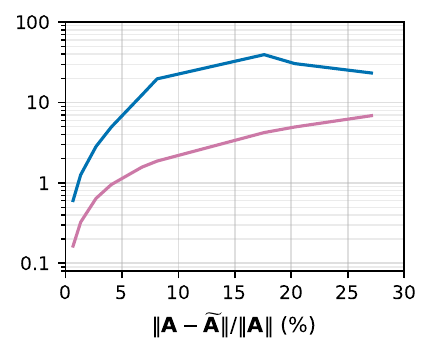}};
\node[inner sep=0pt, right=0pt of b] (c)
  {\includegraphics[width=0.33\columnwidth]{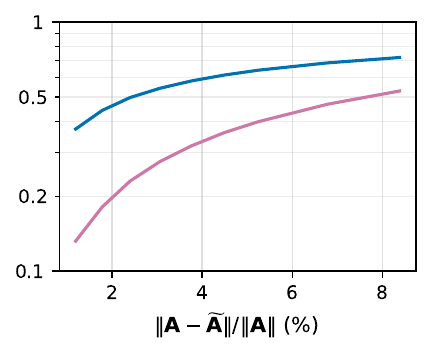}};

\node[testlabel] at ($(a.south east)+(-3mm,10mm)$) {Test I};
\node[testlabel] at ($(b.south east)+(-3mm,10mm)$) {Test II};
\node[testlabel] at ($(c.south east)+(-3mm,10mm)$) {Test III};

\node[legendbox, anchor=south]
  at ($(a.north)!0.5!(c.north) + (0,1mm)$) {%
\begin{tikzpicture}[baseline, font=\tiny]
  \draw[legendline, color={rgb,255:red,0;green,114;blue,178}]
    (0,0) -- (6mm,0);
  \node[right=1.5mm] at (6mm,0)
    {$\|\b I-\Fapprox^{-1}\F\|$ (\texttt{Latent})};
  \draw[legendline, color={rgb,255:red,204;green,121;blue,167}]
    (34mm,0) -- (40mm,0);
  \node[right=1.5mm] at (40mm,0)
    {$\|\b I-\K^{-1}\|$ (\texttt{Proximal})} ;
\end{tikzpicture}%
};
\end{tikzpicture}
\caption{Operator discrepancy in the bimodal test.
The quantities $\|\mathbf I-\Fapprox^{-1}\F\|$ and $\|\mathbf I-\K^{-1}\|$ are reported for \LatentMH and \ProxMH, respectively, where $\K$ is defined in \cref{eq:K-linear}. For \ProxMH, the hyperparameter is set to $\beta=\sigma^2$.}
\label{fig:2well-norm}
\end{figure}


Figure~\ref{fig:2well-converge} compares the sampling performance of different methods in the bimodal test between Tests~I--III. Across Tests~I--III, \ProxMH consistently outperforms \ApproxMH and \LatentMH,
achieving substantially higher acceptance rates, particularly as $\|\A-\Aapprox\|/\|\A\|$ increases.

In Test~II, the additive low-rank perturbation $\Delta\F$ significantly degrades the conditioning of $\F$, leading to amplified inverse-operator errors. To interpret this behavior, we report in \cref{fig:2well-norm} the operator discrepancy terms that appear in the mixing-time bounds \cref{eq:mix-latent,eq:mix-prox}. Although the theoretical analysis in \cref{sec:mix-time} assumes log-concave targets and the bimodal prior used here violates this assumption, the same operator-norm quantities still govern proposal sensitivity. In particular, the filtered error $\|\b I-\K^{-1}\|$ remains stable under perturbations that cause $\|\b I-\Fapprox^{-1}\F\|$ to grow rapidly, which explains the improved robustness of \ProxMH relative to \LatentMH.

From the convergence plots in \cref{fig:2well-converge}, \ProxMH converges significantly faster than the other IMH variants and achieves accuracy comparable to NUTS while using independent proposals. MALA converges slowly in this setting due to both the bimodal posterior and the large smoothness constant of the negative log-posterior, which scales as $L+\|\A\|^2/\sigma^2$ and leads to an unfavorable mixing-time dependence \cite{dwivedi2019}.

The right columns show histograms of the projected samples $w^\top x$ for two representative perturbation levels.
Approx-NUTS, which samples the approximate posterior $\pi_a(x\mid y)$, exhibits biased mode weights, whereas \ProxMH accurately captures both modes in all tests.

Finally, \cref{fig:2well-beta} presents the effect of the hyperparameter $\beta$ in \ProxMH. In all tests, higher acceptance rates correlate with lower relative mean errors, with optimal performance consistently achieved near $\beta=\sigma^2$. Performance degrades noticeably for $\beta>2\sigma^2$, indicating increased sensitivity to over-regularization.

\begin{figure}[t]
\centering
\begin{tikzpicture}

\tikzset{
  panel/.style={inner sep=0pt},
  paneltitle/.style={
    font=\scriptsize,
    anchor=north,
    yshift=2mm
  },
  ylabelL/.style={
    font=\tiny,
    rotate=90,
    text={rgb,255:red,0;green,114;blue,178}
  },
  ylabelR/.style={
    font=\tiny,
    rotate=90,
    text={rgb,255:red,204;green,121;blue,167}
  }
}

\node[panel] (a)
  {\includegraphics[width=0.32\columnwidth]{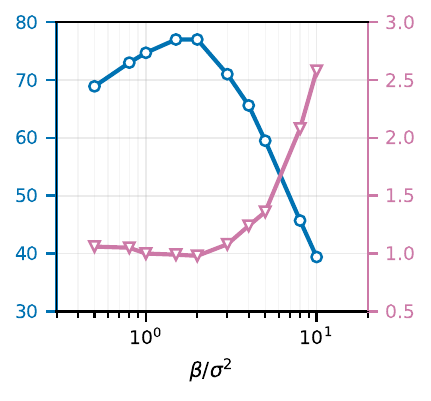}};
\node[panel, right=0mm of a] (b)
  {\includegraphics[width=0.32\columnwidth]{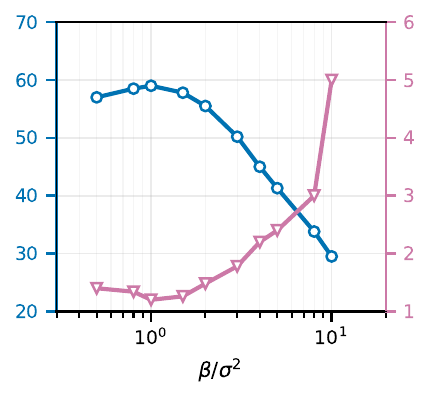}};
\node[panel, right=0mm of b] (c)
  {\includegraphics[width=0.32\columnwidth]{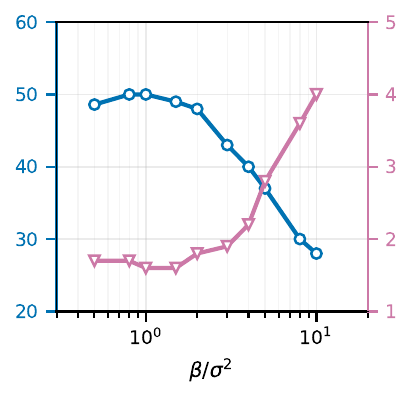}};

\node[paneltitle, below=1.5mm of a.south] {\textbf{Test I} (13.9\%)};
\node[paneltitle, below=1.5mm of b.south] {\textbf{Test II} (2.7\%)};
\node[paneltitle, below=1.5mm of c.south] {\textbf{Test III} (2.4\%)};

\node[ylabelL, anchor=south, xshift=2mm, yshift=-1.5mm]
  at (a.west) {Acceptance ratio (\%)};

\node[ylabelR, anchor=south, xshift=2mm, yshift=-3mm]
  at (c.east) {Relative mean error (\%)};

\end{tikzpicture}
\caption{Effect of $\beta$ on \ProxMH in the bimodal test.
Values in parentheses represent $\|\A-\Aapprox\|/\|\A\|$. Blue curves show acceptance rates (left y-axis), and pink curves show relative mean errors after $10^5$ MH steps (right y-axis).}
\label{fig:2well-beta}
\end{figure}

\begin{figure}[t]
\centering
\scriptsize
\begin{tikzpicture}
\draw[fill=tabgreen!30, draw=white, line width=.5pt] (-1.6,0.6)  rectangle (3.7,1.4);
\draw[fill=tabred!30, draw=white, line width=.5pt] (-2.7,0.6)  rectangle (-1.6,1.4);
\draw[fill=tabblue!30, draw=white, line width=.5pt] (-3.7,0.6)  rectangle (-2.7,1.4);

\node[inner sep=0pt] (a)
  {\includegraphics[width=0.9\columnwidth]{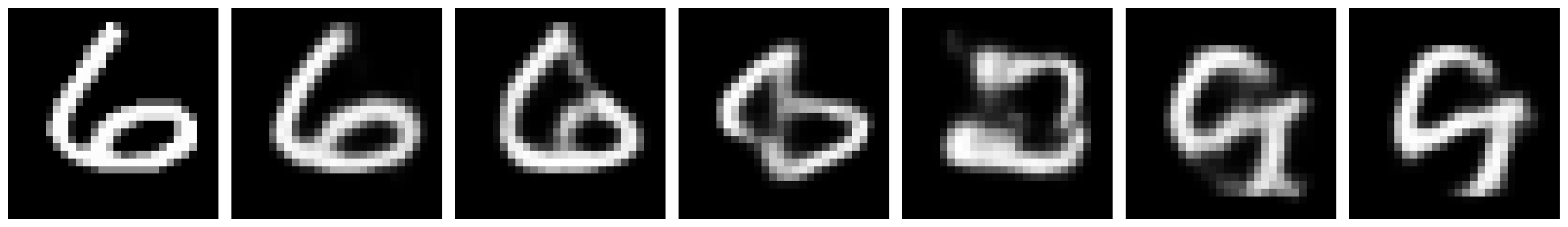}};
\node[rotate=90, anchor=south, font=\bfseries]
  at ([xshift=-1mm]a.west) {Test~IV};

\node[inner sep=0pt, below=1mm of a] (b)
  {\includegraphics[width=0.9\columnwidth]{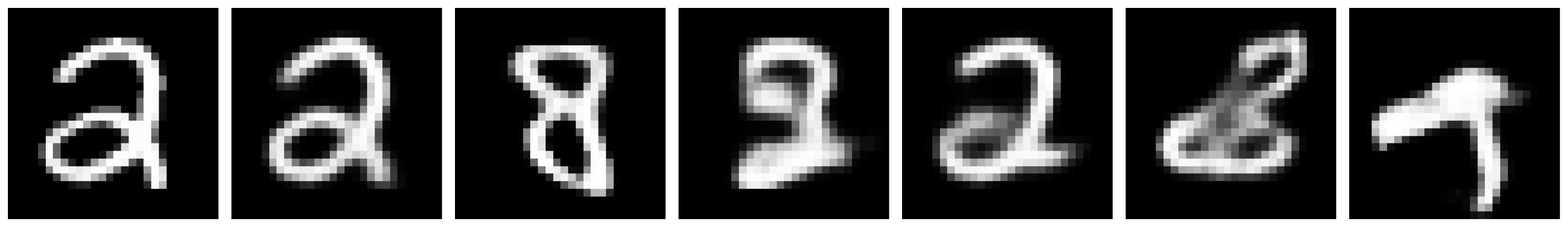}};
\node[rotate=90, anchor=south, font=\bfseries]
  at ([xshift=-1mm]b.west) {Test~V};
  
\node[inner sep=0pt, below=1mm of b] (c)
  {\includegraphics[width=0.9\columnwidth]{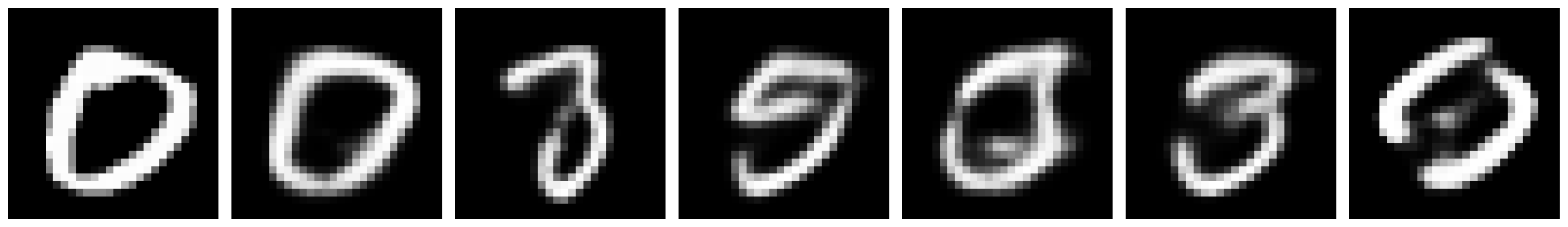}};
\node[rotate=90, anchor=south, font=\bfseries]
  at ([xshift=-1mm]c.west) {Test~VI};
  
\node[inner sep=0pt, below=1mm of c] (d)
  {\includegraphics[width=0.9\columnwidth]{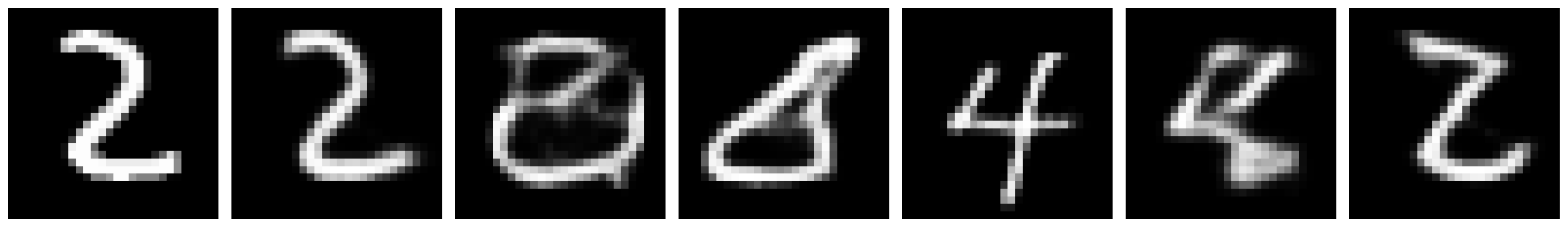}};
\node[rotate=90, anchor=south, font=\bfseries]
  at ([xshift=-1mm]d.west) {Test~VII};
\node[inner sep=0pt] (3) at (1,1) {\textbf{MAP}};
\node[anchor=center,inner sep=0pt, text width =10mm] (3) at (-3.1,1) {\textbf{Ground Truth}};
\node[inner sep=0pt, text width =10mm] (3) at (-2.1,1) {\textbf{Posterior Mean}};
\end{tikzpicture}
\caption{Ground truth, posterior mean and MAP estimates for the MNIST inverse problem in~\textbf{Tests~IV--VII}. The MAP estimates are obtained from different random initial guesses for the optimization problem. We observe that in several cases the reconstructed MAP point is wrong. This is due to the nonconvexity of solving for a MAP point. }
\label{fig:mnist-compare}
\end{figure}


\begin{figure}[t]
\centering
\scriptsize
\newlength{\rowgap}
\newlength{\leggap}
\newlength{\colgap}
\setlength{\rowgap}{2mm}
\setlength{\leggap}{6mm}
\setlength{\colgap}{3mm}
\begin{tikzpicture}[
  rowlabel/.style={
    rotate=90,
    anchor=center,
    text width=10mm,
    align=center,
    font=\bfseries,
    inner sep=1pt
  },
  legendbox/.style={
    anchor=south,
    draw=gray!70,
    line width=0.5pt,
    rounded corners=2pt,
    inner sep=2pt
  }
]

\node[inner sep=0pt, xshift=10mm] (r1c1)
  {\includegraphics[width=0.44\columnwidth]{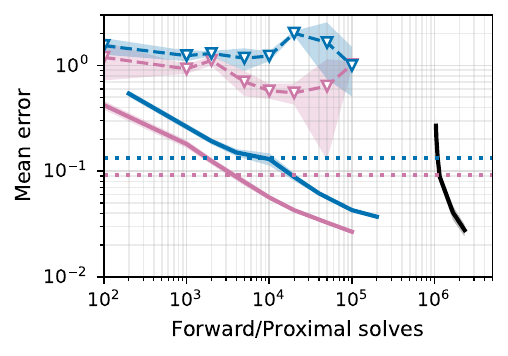}};
\node[inner sep=0pt, right=3mm of r1c1] (r1c2)
  {\includegraphics[width=0.44\columnwidth]{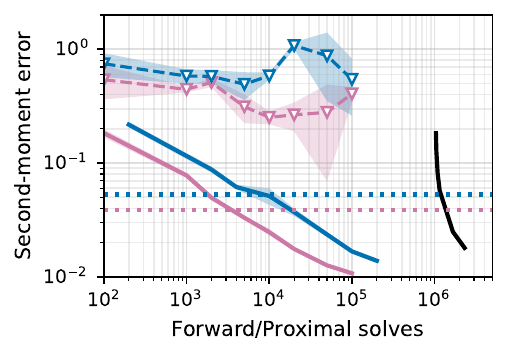}};
\node[rowlabel] at ([xshift=-2mm]r1c1.west) {Test IV};

\node[legendbox] at ([xshift=23mm, yshift=-1mm]r1c1.north)
  {\includegraphics[width=0.8\columnwidth]{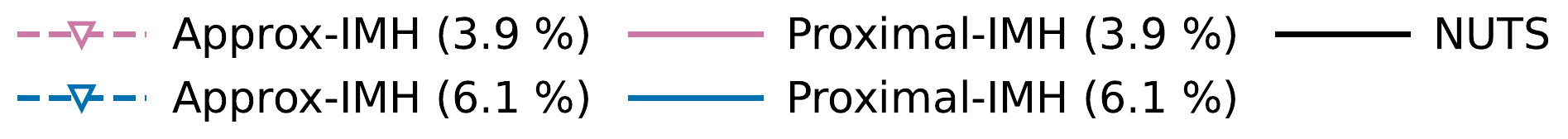}};

\node[inner sep=0pt, below=\rowgap of r1c1] (r2c1)
  {\includegraphics[width=0.44\columnwidth]{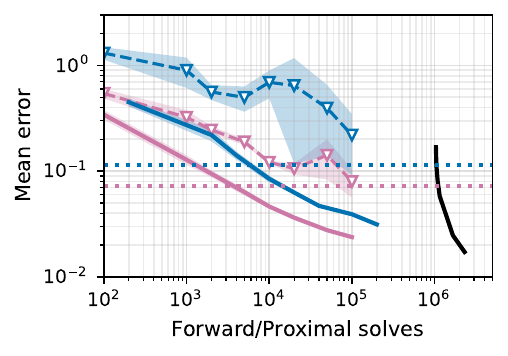}};
\node[inner sep=0pt, right=\colgap of r2c1] (r2c2)
  {\includegraphics[width=0.44\columnwidth]{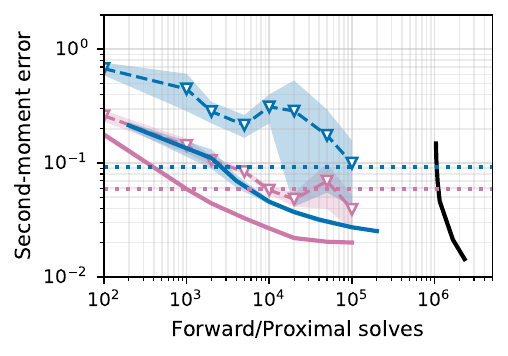}};
\node[rowlabel] at ([xshift=-2mm]r2c1.west) {Test V};

\node[inner sep=0pt, below=\leggap of r2c1] (r3c1)
  {\includegraphics[width=0.44\columnwidth]{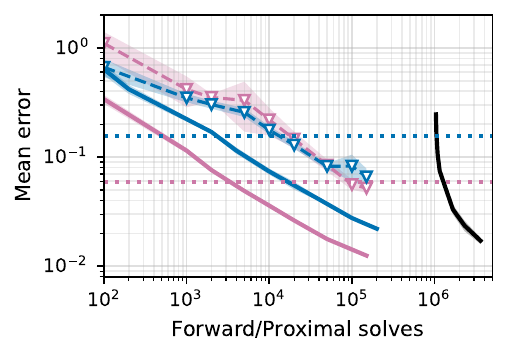}};
\node[inner sep=0pt, right=\colgap of r3c1] (r3c2)
  {\includegraphics[width=0.44\columnwidth]{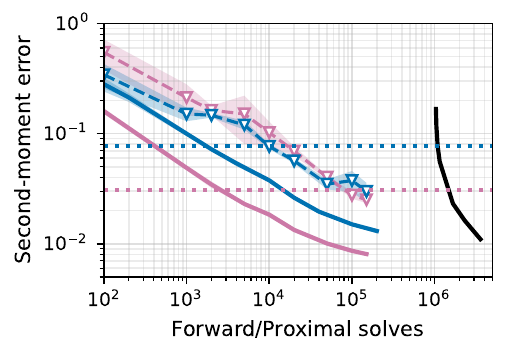}};
\node[rowlabel] at ([xshift=-2mm]r3c1.west) {Test VI};

\node[legendbox] at ([xshift=23mm, yshift=-1mm]r3c1.north)
  {\includegraphics[width=0.8\columnwidth]{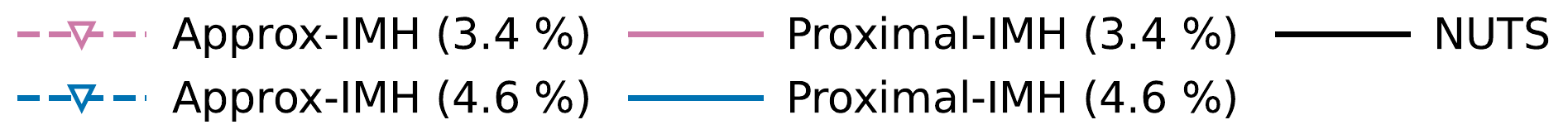}};

\node[inner sep=0pt, below=\rowgap of r3c1] (r4c1)
  {\includegraphics[width=0.44\columnwidth]{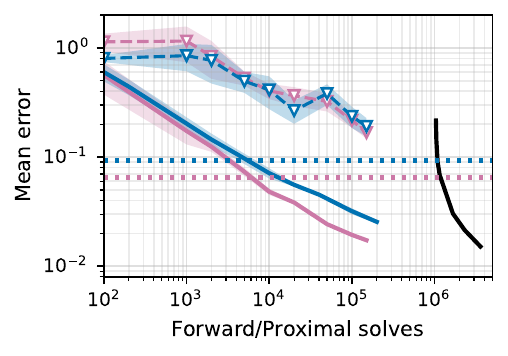}};
\node[inner sep=0pt, right=\colgap of r4c1] (r4c2)
  {\includegraphics[width=0.44\columnwidth]{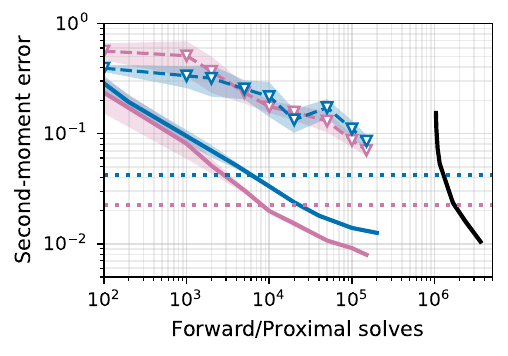}};
\node[rowlabel] at ([xshift=-2mm]r4c1.west) {Test~VII};
\end{tikzpicture}
\caption{Sampling performance on MNIST imaging inverse problems: convergence of the relative mean error and relative second-moment error for different sampling methods. Dashed lines indicate the bias induced by the \Approx $\pi_a(x\mid y)$. For \ProxMH, we set $\beta=\sigma^2$. Results in the convergence plots are reported over 5 independent trials.}
\label{fig:mnist-converge}
\end{figure}

\subsection{MNIST test.}\label{sec:mnist}
We consider an inverse imaging problem with the forward model $y = \A x + e$, where $x$ denotes the image in pixel space, $\A$ is a full-rank linear operator, and $y$ is the observed signal.
As a prior, we use the decoder of a variational autoencoder (VAE) trained on the MNIST dataset~\cite{lecun1998gradient}.
Specifically, let $G(\cdot)$ denote the decoder mapping latent variables to images in the pixel space. The forward model in latent space is then given by $y = \A G(z) + e$. In our experiments, we set $d_z = 128$, $d_x = 784$, and $d_y = 100$. We focus on sampling the posterior distribution in the latent space:
\begin{align}\label{eq:mnist-post}
    \pi(z \mid y) \propto
    \exp\!\big(
    -\frac{1}{2\sigma^2} \|\A G(z) - y\|^2
    - \frac{1}{2}\|z\|^2
    \big).
\end{align}

We randomly select digits from the MNIST test dataset, which were not seen during VAE training, and generate observations through $y = \A x + e$. We consider two types of exact operators $\A$ and approximations $\Aapprox$:
\begin{itemize}[leftmargin=*, itemsep=0pt, topsep=2pt]
    \item \textbf{Tests~IV and V}: The operators $\A$ and $\Aapprox$ are constructed in the same manner as in Test~I. The resulting operator $\A$ has condition number $\kappa(\A)\approx 60$.
    \item \textbf{Tests~VI and VII (linearized Helmholtz operator)}: The operator $\A$ is constructed using a linearized Helmholtz operator based on the Born Approximation~\cite{colton-kress98} on a fine grid ($100\times100$). The approximation $\Aapprox$ is constructed on coarser grids ($35\times35$ or $20\times20$). The operator $\A$ has condition number $\kappa(\A) \approx 420$. Details of the construction procedure are provided in \cref{sec:append-mnist}.
\end{itemize}

\cref{fig:mnist-compare} shows the ground truth, posterior means, and maximum-a-posteriori (MAP) estimates for the MNIST inverse problem. MAP solutions are obtained by directly minimizing the negative logarithmic posterior defined in \cref{eq:mnist-post}, using the Adam optimizer with a sufficiently large number of iterations to ensure convergence. As illustrated in the figure, the resulting MAP estimates display substantial variability across runs, reflecting the nonconvexity of the posterior landscape. In contrast, the posterior means reported in Figure~\ref{fig:mnist-converge} achieve noticeably smaller reconstruction errors relative to the ground truth. 

We further report convergence results for \ApproxMH, \ProxMH, and NUTS in terms of the relative mean error and the relative componentwise second moment error; \ProxMH uses a single Gauss–Newton step. Compared to \Approx, \ProxMH exhibits lower error with faster convergence, reduced variability across trials, and robustness to different observation realizations.

\Cref{fig:mnist-beta} illustrates the effect of $\beta$ on \ProxMH. Consistent with bimodal experiments (\cref{fig:2well-beta}), higher acceptance rates are associated with smaller relative mean errors, but performance in this test is largely insensitive to $\beta$, with a mean error variance below $0.4\%$. We further examine the variability of the Jacobian determinant associated with a single Gauss--Newton correction in \cref{fig:logdet}.

\begin{figure}
    \centering
\begin{tikzpicture}
    \node[inner sep=0pt] (a) at (0,0)
  {\includegraphics[width=0.24\textwidth]{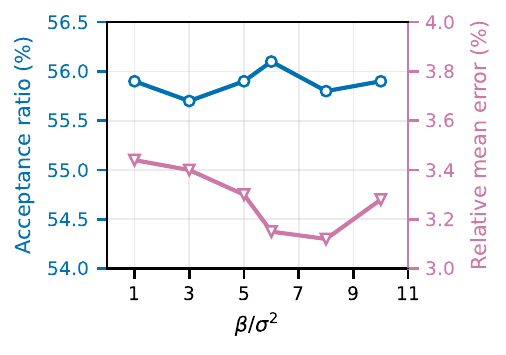}};
  \node[inner sep=0pt] (b)  at (0.24\textwidth,0)
  {\includegraphics[width=0.24\textwidth]{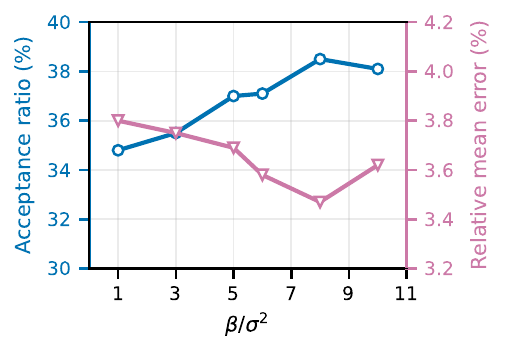}};
\node[anchor=north,yshift=-1pt] at (a.south) {\scriptsize \textbf{Test IV} (6.1\%)};
\node[anchor=north,yshift=-1pt] at (b.south) {\scriptsize \textbf{Test V} (6.1\%)};
\end{tikzpicture}
    \caption{Effect of the hyperparameter $\beta$ on \ProxMH in the inverse problem in MNIST. Values in parentheses represent $\|\A-\Aapprox\|/\|\A\|$. Blue curves show acceptance rates (left y-axis), and pink curves show relative mean errors after $10^5$ MH steps (right y-axis). The results are not sensitive to $\beta$.}
    \label{fig:mnist-beta}
\end{figure}

\subsection{Nonlinear Helmholtz test.}\label{sec:test-nonlinear}

We consider a nonlinear inverse problem arising from acoustic scattering governed by the Helmholtz equation. The unknown medium $x$ is represented on a $32\times 32$ grid, while the wavefield is discretized on a finer grid $128\times 128$. We consider $4$ independent source terms. The forward operator $\A_i(x)$ is defined implicitly by solving a Helmholtz equation with homogeneous Neumann boundary conditions (see \cref{sec:append-nonlinear} for details). The Helmholtz operator is nonlinear with respect to the scatterer properties. Assuming independent observations, the negative log-posterior $-\log \pi(x \mid \{y_i\})$ (up to an additive constant) is
\begin{align}
\frac{1}{2\sigma^2}\sum_{i=1}^{4}\|y_i-\A_i(x)\|^2 + TV_{\epsilon}(x),
\end{align}
where $TV_{\epsilon}(x)$ denotes a smoothed total-variation prior.


To construct an approximate forward model, we solve the Helmholtz equation on a coarser $64\times 64$ grid.
All linear systems arising in the forward and adjoint solves are computed using GMRES with a Neumann--DCT preconditioner; implementation details are provided in \cref{sec:append-nonlinear}.

\cref{fig:nonlinear} shows the ground truth, the MAP estimate, and the posterior mean. The MAP estimate is closer to the ground truth because the smoothed TV prior is weakly informative, allowing the likelihood to dominate the posterior; this observation is further corroborated by \cref{fig:u-plots} in the appendix, where the corresponding wavefield solutions $u_i$ for the ground truth, MAP, and posterior mean are nearly indistinguishable. From the convergence results in \cref{fig:nonlinear}, \ProxMH consistently outperforms \ApproxMH and achieves accuracy comparable to NUTS, while retaining the computational advantages of independent proposals.

\begin{figure}[t]
\centering
\scriptsize
\begin{tikzpicture}
\def\W{0.3\columnwidth}  
\def\xgap{0.1\columnwidth} 
\def\ygap{1mm}              
\node[inner sep=0pt] (a)
  {\includegraphics[width=\W]{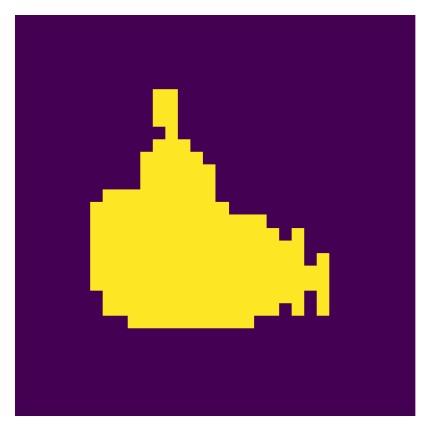}};
\node[inner sep=0pt, right=\xgap of a,xshift=2mm] (b)
  {\includegraphics[width=\W]{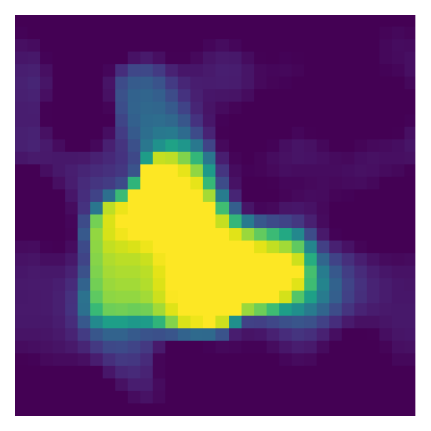}};

\node[
  rotate=90,
  anchor=center,
  font=\bfseries
] at ([xshift=-3mm]a.west) {Ground Truth};
\node[
  rotate=90,
  anchor=center,
  font=\bfseries
] at ([xshift=-3mm]b.west) {MAP};
\node[inner sep=0pt, below=\ygap of a] (c)
  {\includegraphics[width=\W]{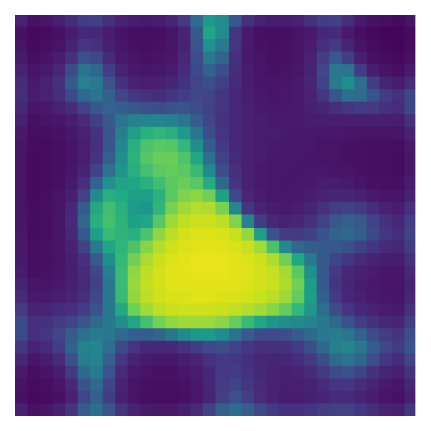}};
\node[inner sep=0pt, right=\xgap of c,xshift=-5mm, yshift=-2mm] (d)
  {\includegraphics[width=0.4\columnwidth]{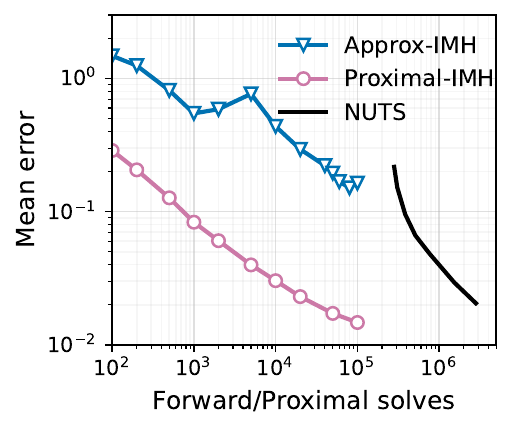}};

\node[
  rotate=90,
  anchor=center,
  font=\bfseries
] at ([xshift=-3mm]c.west) {Posterior Mean};

\end{tikzpicture}
\caption{Nonlinear Helmholtz inverse problem. Ground truth, MAP estimate, and posterior mean for the acoustic scattering test, together with convergence of the relative mean error.}
\label{fig:nonlinear}
\end{figure}

\section{Conclusion}\label{sec:conclusion}
We introduced \ProxMH, an independence Metropolis--Hastings sampler for inverse problems with approximate forward operators. \ProxMH generates global proposals by correcting samples from an approximate posterior through a stable optimization step. For linear forward models, we showed that \ProxMH achieves smaller expected KL divergence and faster mixing than existing IMH schemes under suitable regularity assumptions. Numerical experiments on multimodal, imaging, and nonlinear PDE-based inverse problems consistently confirm improved acceptance rates, convergence, and accuracy.
\paragraph{Limitations.}
Theoretical guarantees are derived under linear forward models and regularity assumptions on the prior, while the nonlinear extension relies on local Gauss--Newton corrections and neglects Jacobian terms in practice. Performance also depends on the quality of the approximate operator $\Fapprox$ and the choice of the regularization parameter $\beta$ in some cases.


\clearpage
\section*{Impact Statement}
This paper aims to advance the field of Machine Learning. While our work may have various societal impacts, none require specific mention here.

\section*{Acknowledgements}
This material is based upon work supported by NSF award OAC 22042261; and  Cooperative Agreement 2421782 and the Simons Foundation award MPS-AI-00010515 (NSF-Simons AI Institute for Cosmic Origins---CosmicAI, \url{https://www.cosmicai.org/}); by the U.S. Department of Energy, Office of Science, Office of Advanced Scientific Computing Research, Applied Mathematics program, Mathematical Multifaceted Integrated Capability Centers (MMICCS) program, under award number DE-SC0023171; by the U.S. Department of Energy, National Nuclear Security Administration Award Number DE-NA0003969; and by the U.S. National Institute on Aging under award number  R21AG074276-01. Any opinions, findings, and conclusions or recommendations expressed herein are those of the authors and do not necessarily reflect the views of the DOE, NIH, and NSF. Computing time on the Texas Advanced Computing Centers Stampede system was provided by an allocation from TACC and the NSF.

\bibliography{reference}

@article{chen-biros26,
    author = {Youguang Chen and George Biros},
    title = {{Latent-IMH}: Efficient {Bayesian} Inference for Inverse Problems with Approximate Operators},
    journal = {arXiv preprint},
    year = 2026
}

@article{dwivedi2019,
  title={Log-concave sampling: Metropolis-Hastings algorithms are fast},
  author={Dwivedi, Raaz and Chen, Yuansi and Wainwright, Martin J and Yu, Bin},
  journal={Journal of Machine Learning Research},
  volume={20},
  number={183},
  pages={1--42},
  year={2019}
}

@article{lecun1998gradient,
  title={Gradient-based learning applied to document recognition},
  author={LeCun, Yann and Bottou, L{\'e}on and Bengio, Yoshua and Haffner, Patrick},
  journal={Proceedings of the IEEE},
  volume={86},
  number={11},
  pages={2278--2324},
  year={1998},
  publisher={IEEE}
}

@article{kaipio-e00,
   Author = {Kaipio, J. P. and Kolehmainen, V. and Somersalo, E. and Vauhkonen, M.},
   Title = "{Statistical inversion and Monte Carlo sampling methods in electrical impedance tomography}",
   Journal = {Inverse Problems},
   Volume = {16},
   Number = {5},
   Pages = {1487-1522},
   Year = {2000} }

@BOOK{colton-kress98,
  title = {Inverse Acoustic and Electromagnetic Scattering Theory, 2nd Edition},
  publisher = {Springer},
  year = {1998},
  author = {David Colton and Rainer Kress},
  series = {Applied Mathematical Sciences}
}

@article{saratoon-arridge-e13,
  title={A gradient-based method for quantitative photoacoustic tomography using the radiative transfer equation},
  author={Saratoon, T and Tarvainen, T and Cox, BT and Arridge, SR},
  journal={Inverse Problems},
  volume={29},
  number={7},
  pages={075006},
  year={2013},
  publisher={IOP Publishing}
}

@article{arridge-schotland99,
  title={Optical tomography: forward and inverse problems},
  author={Arridge, Simon R and Schotland, John C},
  journal={Inverse Problems},
  volume={25},
  number={12},
  pages={123010},
  year={2009},
  publisher={IOP Publishing}
}

@book{tarantola-05,
  title = {Inverse problem theory and methods for model parameter estimation},
  publisher = {Society for Industrial and Applied Mathematics (SIAM)},
  year = {2005},
  author = {Tarantola, Albert},
  pages = {xii+342},
  address = {Philadelphia, PA},
  isbn = {0-89871-572-5}
}

@book{vogel02,
  title={Computational methods for inverse problems},
  author={Vogel, C.R.},
  volume={23},
  year={2002},
  publisher={SIAM}
}

@article{ghattas2021learning,
  title={Learning physics-based models from data: perspectives from inverse problems and model reduction},
  author={Ghattas, Omar and Willcox, Karen},
  journal={Acta Numerica},
  volume={30},
  pages={445--554},
  year={2021},
  publisher={Cambridge University Press}
}

@article{girolami2011riemann,
  title={Riemann manifold {L}angevin and {Hamiltonian Monte Carlo} methods},
  author={Girolami, Mark and Calderhead, Ben},
  journal={Journal of the Royal Statistical Society: Series B (Statistical Methodology)},
  volume={73},
  number={2},
  pages={123--214},
  year={2011},
  publisher={Wiley Online Library}
}

@article{hoffman2014no,
  title={The {No-U-Turn} sampler: adaptively setting path lengths in {Hamiltonian Monte Carlo}},
  author={Hoffman, Matthew D and Gelman, Andrew and others},
  journal={Journal of Machine  Learning Research},
  volume={15},
  number={1},
  pages={1593--1623},
  year={2014}
}

@article{inverse-song-ermon21,
  title={Solving inverse problems in medical imaging with score-based generative models},
  author={Song, Yang and Shen, Liyue and Xing, Lei and Ermon, Stefano},
  journal={arXiv preprint arXiv:2111.08005},
  year={2021}
}

@article{papamakarios2021normalizing,
  title={Normalizing flows for probabilistic modeling and inference},
  author={Papamakarios, George and Nalisnick, Eric and Rezende, Danilo Jimenez and Mohamed, Shakir and Lakshminarayanan, Balaji},
  journal={Journal of Machine Learning Research},
  volume={22},
  number={57},
  pages={1--64},
  year={2021}
}

@article{spantini2015optimal,
  title={Optimal low-rank approximations of Bayesian linear inverse problems},
  author={Spantini, Alessio and Solonen, Antti and Cui, Tiangang and Martin, James and Tenorio, Luis and Marzouk, Youssef},
  journal={SIAM Journal on Scientific Computing},
  volume={37},
  number={6},
  pages={A2451--A2487},
  year={2015},
  publisher={SIAM}
  }

@book{biegler2010large,
  title={Large-scale inverse problems and quantification of uncertainty},
  author={Biegler, Lorenz and Biros, George and Ghattas, Omar and Heinkenschloss, Matthias and Keyes, David and Mallick, Bani and Tenorio, Luis and van Bloemen Waanders, Bart and Willcox, Karen and Marzouk, Youssef},
  volume={712},
  year={2010},
  publisher={Wiley Online Library}
}

@book{book-kaipio2006,
  title={Statistical and computational inverse problems},
  author={Kaipio, Jari and Somersalo, Erkki},
  volume={160},
  year={2006},
  publisher={Springer Science \& Business Media}
}

@inproceedings{cohen2014solving,
  title={Solving SDD linear systems in nearly m log1/2 n time},
  author={Cohen, Michael B and Kyng, Rasmus and Miller, Gary L and Pachocki, Jakub W and Peng, Richard and Rao, Anup B and Xu, Shen Chen},
  booktitle={Proceedings of the forty-sixth annual ACM symposium on Theory of computing},
  pages={343--352},
  year={2014}
}

@book{van2003iterative,
  title={Iterative Krylov methods for large linear systems},
  author={Van der Vorst, Henk A},
  number={13},
  year={2003},
  publisher={Cambridge University Press}
}

@article{boomeramg00,
 author = {Van Emden Henson and Ulrike Meier Yang},
 title = {BoomerAMG: a parallel algebraic multigrid solver and preconditioner},
 journal = {Appl. Numer. Math.},
 volume = {41},
 number = {1},
 year = {2002},
 issn = {0168-9274},
 pages = {155--177},
 publisher = {Elsevier Science Publishers B. V.},
 address = {Amsterdam, The Netherlands, The Netherlands},
 }

@article{mala-roberts1996,
  title={Exponential convergence of {L}angevin distributions and their discrete approximations},
  author={Roberts, Gareth O. and Tweedie, Richard L.},
  year={1996}
}

@article{2stage,
author = {Peherstorfer, Benjamin and Willcox, Karen and Gunzburger, Max},
title = {Survey of Multifidelity Methods in Uncertainty Propagation, Inference, and Optimization},
journal = {SIAM Review},
volume = {60},
number = {3},
pages = {550-591},
year = {2018},
doi = {10.1137/16M1082469},
URL = {https://doi.org/10.1137/16M1082469},
}

@rticle{liang2020deep,
  title={Deep magnetic resonance image reconstruction: Inverse problems meet neural networks},
  author={Liang, Dong and Cheng, Jing and Ke, Ziwen and Ying, Leslie},
  journal={IEEE Signal Processing Magazine},
  volume={37},
  number={1},
  pages={141--151},
  year={2020},
  publisher={IEEE}
}

@book{epstein2007introduction,
  title={Introduction to the mathematics of medical imaging},
  author={Epstein, Charles L},
  year={2007},
  publisher={SIAM}
}

@article{herrmann2024neural,
  title={Neural and spectral operator surrogates: unified construction and expression rate bounds},
  author={Herrmann, Lukas and Schwab, Christoph and Zech, Jakob},
  journal={Advances in Computational Mathematics},
  volume={50},
  number={4},
  pages={72},
  year={2024},
  publisher={Springer}
}
\bibliographystyle{icml2026}

\newpage
\appendix
\onecolumn
\section{Proofs of Main Results}

\subsection{Proof of \cref{thm:kl-simple}}\label{sec:proof-kl}

Under the Gaussian setting, the expected KL divergence $\KLapprox$ admits the following closed-form expression:
{\setlength{\abovedisplayskip}{6pt}
 \setlength{\belowdisplayskip}{6pt}
 \setlength{\abovedisplayshortskip}{6pt}
 \setlength{\belowdisplayshortskip}{6pt}
\begin{align}
\KLapprox
&=
\begin{tikzpicture}[baseline=(X.base)]
\node (X) {$
\log\frac{|\bSigma|}{|\bSigma_a|}
+ \Tr(\bSigma^{-1} \bSigma_a) - d
$};
\draw[decorate,decoration={brace,mirror},
      color=tabblue, line width=0.8pt]
  ([xshift=0ex]X.south west) --
  ([xshift=0ex]X.south east)
  node[midway, yshift=-6pt, text=tabblue]
  {\scriptsize covariance mismatch};
\end{tikzpicture}
\label{eq:kl-approx}\\
&\quad +
\begin{tikzpicture}[baseline=(Y.base)]
\node (Y) {$
\frac{1}{\sigma^2}\|\A \daapprox \A \|_F^2
+ \sigma^2 \|\daapprox \|_F^2
+ 2 \| \daapprox \A\|_F^2
$};
\draw[decorate,decoration={brace,mirror},
      color=tabred, line width=0.8pt]
  ([xshift=0ex]Y.south west) --
  ([xshift=0ex]Y.south east)
  node[midway, yshift=-6pt, text=tabred]
  {\scriptsize mean mismatch};
\end{tikzpicture},\nonumber
\end{align}
}
where $\daapprox := \A_a^\dagger - \A^\dagger$. The explicit forms of $\bSigma$, $\bSigma_a$, $\A^\dagger$, and $\A_a^\dagger$ are given in \cref{tab:posterior-gaussian}.

The expressions for $\KLlatent$ and $\KLprox$ follow the same structure as \cref{eq:kl-approx}, with $(\A_a^\dagger,\bSigma_a)$ replaced by $(\A_l^\dagger,\bSigma_l)$ and $(\A_p^\dagger,\bSigma_p)$, respectively.

\begin{proof}[Proof of \cref{thm:kl-simple}]
The expressions for $\KLapprox$ and $\KLlatent$ have been proved in Proposition 3.3 in \cite{chen-biros26}. We only need to prove the expression for $\KLprox$ here. 
\begin{align}\label{eq:pf-kl-prox}
    \KLprox= \log\frac{|\bSigma|}{|\bSigma_p|} + \Tr(\bSigma^{-1} \bSigma_p) - d +\frac{1}{\sigma^2}\|\A \daprox \A \|_F^2+ \sigma^2 \|\daprox \|_F^2 + 2 \| \daprox \A\|_F^2,
\end{align}
where $\bSigma$ and $\bSigma_p$ are the covariances of \Exact and \Proximal, and $\daprox:=\A_p^\dagger - \A^\dagger$ is the pseudo-inverse difference.

By the diagonal assumptions of $\F$, $\Fapprox$ and $\b O$, we have $\A, \Aapprox\in\mathbb{R}^{d_y \times d}$ are matrices with zero elements except for:
\begin{align}
    \A_{ii} = s_i, \qquad \Aapprox_{ii} =\alpha_i s_i, \quad \forall i\in[d_y].
\end{align}
By the formulations listed  in \cref{tab:posterior-gaussian}, $\A^\dagger, \A_a^\dagger, \A_p^\dagger \in \mathbb{R}^{d\times d_y}$ have all zero entries except for:
\begin{align}
    \A^\dagger_{ii} = \frac{s_i}{s_i^2 +\sigma^2}, \qquad \left(\A_a^\dagger\right)_{ii} = \frac{\alpha_i s_i}{\alpha_i^2 s_i^2 + \sigma^2} ,\qquad 
    \left(\A_p^\dagger \right)_{ii} = \frac{\alpha_i^2 s_i^3 + \sigma^2}{(s_i^2+\sigma^2)(\alpha_i^2 s_i^2 + \sigma^2)}, \quad \forall i \in[d_y].
\end{align}
Then  $\daprox\in\mathbb{R}^{d\times d_y}$ is matrix with zero elements except for:
\begin{align}\label{eq:pf-kl-daprox}
    (\daprox)_{ii} = (\A_p^\dagger)_{ii} - \A^\dagger_{ii} = \frac{\sigma^2 s_i (\alpha_i - 1)}{(\alpha_i^2s_i^2 + \sigma^2)(s_i^2 + \sigma^2)}, \quad \forall i\in[d_y].
\end{align}
In addition, the covariance of the posterior $\bSigma, \bSigma_p \in\mathbb{R}^{d\times d}$ are diagonal matrices with diagonals as  
\begin{align}
   \begin{cases}\label{eq:kl-covariance}
      \bSigma_{ii}= \frac{\sigma^2}{s_i^2 +\sigma^2},\qquad (\bSigma_p)_{ii} = \frac{ \sigma^2 \rho_i}{ s_i^2 + \sigma^2}, \quad i\leq d_y,\\
        \bSigma_{ii}= 1,\qquad (\bSigma_p)_{ii} = 1,\qquad i >d_y.
    \end{cases}
\end{align}
Thus, for the covariance mismatch of \cref{eq:pf-kl-prox}, we have 
\begin{align}\label{eq:kl-prox-1}
    \log\frac{|\bSigma|}{|\bSigma_p|} = \sum_{i\in[d_y]} \log \frac{1}{\rho_i},\quad \Tr(\bSigma^{-1} \bSigma_p) = \sum_{i\in[d_y]} \rho_i + d- d_y.
\end{align}
Based on \cref{eq:pf-kl-daprox}, we have
\begin{align}
    \left(\A \daprox\right)_{ii} = \frac{\sigma^2 s_i^2(\alpha_i -1)}{(\alpha_i^2 s_i^2 +\sigma^2) (s_i^2 + \sigma^2)}
\end{align}
Define $\psi_i = (\alpha_i^2 s_i^2 +\sigma^2) (s_i^2 + \sigma^2)$, we have formulations for the  mean mismatch terms in \cref{eq:pf-kl-prox}:
\begin{align}
    \frac{1}{\sigma^2} \| \A \daprox \A\|_F^2 = \sum_{i\in[d_y]}\frac{\sigma^2 s_i^6 (\alpha_i-1)^2}{\psi_i^2}, \quad \sigma^2\|\daprox\|_F^2 = \sum_{i\in[d_y]}\frac{\sigma^6 s_i^2 (\alpha_i-1)^2}{\psi_i^2},\quad
    2\|\daprox \A\|_F^2 = \sum_{i\in[d_y]}\frac{2 \sigma^4 s_i^4 (\alpha_i -1)^2}{\psi_i^2}.
\end{align}
Thus, the mean mismatch term is 
\begin{align}\label{eq:kl-prox-2}
    \frac{1}{\sigma^2}\|\A \daprox \A \|_F^2+ \sigma^2 \|\daprox \|_F^2 + 2 \| \daprox \A\|_F^2 = \sum_{i\in[d_y]}\frac{\sigma^2 s_i^2 (\alpha_i-1)^2}{(\alpha_i^2 s_i^2 + \sigma^2)^2} =\sum_{i\in[d_y]} \zeta_i(\alpha_i-1)^2 s_i^2 \sigma ^2. 
\end{align}
Substituting \cref{eq:kl-prox-1,eq:kl-prox-2} into \cref{eq:pf-kl-prox}, we can get the expression for $\KLprox$ in \cref{thm:kl-simple}.
\end{proof}

\subsection{Mixing Time}\label{sec:proof-mix-time}

\begin{assumption}\label{assume:log-concave}
    The exact posterior $\pi(x|y)$ is strongly log-concave, i.e., $-\log \pi(x|y)$ is $m$-strongly convex on $\mathbb{R}^d$.
\end{assumption}

\begin{assumption}\label{assume:lipschitz}
  Log-weight functions are locally Lipschitz. For any $R>0$, there exist constants $C_a(R) \geq 0$ such that for all $(x,x') \in \mathsf{Ball}(x^*,R)$,  $\|\wapprox(x) - \wapprox(x')\|\leq C_a(R)\|x-x'\|$.

   For the functions $\wnew(x)$ and $\wprox(x)$, we assume that this condition holds with constants $C_l(R)$ and $C_p(R)$, respectively.

\end{assumption}

\begin{theorem}[Adapted from Theorem~4.3 in~\cite{chen-biros26}]\label{thm:mix-general}
 Let $\epsilon\in(0,1)$. Assume that $\pi_a(x \mid y)$, $\pi_l(x\mid y)$ and $\pi_p(x\mid y)$ are all $\gamma$-warm starts with respect to the exact posterior $\pi(x|y)$  (i.e., for any Borel set $\mathcal{E}$, $\mu_0(\mathcal{E}) \leq \gamma \pi(\mathcal{E})$).
 Assume that \cref{assume:log-concave} and \cref{assume:lipschitz} hold. For \cref{assume:lipschitz}, suppose that the Lipschitz conditions hold with $C_a({R_a})$, $C_l({R_l})$ and $C_p({R_p})$ $\leq\log 2 \sqrt{m}/32 $, where 
 \begin{align}
R_a = \max\left\{
\sqrt{\frac{d}{m}}r\!\left(\frac{\epsilon}{17\gamma}\right),
\sqrt{\frac{d}{m}}r\!\left(\frac{\epsilon}{272\gamma}\right)+\|x^*-x_a^*\|
\right\}.
 \end{align}
 \begin{align}
     R_l &= \max\left\{\sqrt{\frac{d}{m}}r\left(\frac{\epsilon}{17\gamma}\right), \quad \sqrt{\frac{d}{m}}r\left(\frac{\epsilon}{272\gamma}\right)\frac{1}{\sigma_{\min}(\Fapprox^{-1} \F)}+ \|x^* - \xnew^*\|\right\},
 \end{align}
  \begin{align}\label{eq:radius-proximal}
     R_p &= \max\left\{\sqrt{\frac{d}{m}}r\left(\frac{\epsilon}{17\gamma}\right), \quad \sqrt{\frac{d}{m}}r\left(\frac{\epsilon}{272\gamma}\right){\sigma_{\max}(\K)}+ \|x^* - x_p^*\|\right\},
 \end{align}
where  $\sigma_{\min}(\cdot)$ and $\sigma_{\max}(\cdot)$ are the smallest and largest singular values of the matrix,  $r(\cdot)$ is a constant  defined by
\begin{align}\label{eq:r-def}
    r(s)  = 2 + 2 \max\left\{\frac{-\log^{1/4}(s)}{d^{1/4}}, \frac{-\log^{1/2}(s)}{d^{1/2}} \right\}.
\end{align}
    Then the mixing time for \ApproxMH has
    \begin{align}\label{eq:mix-approx}
       \mixapprox(\epsilon) \leq 128 \log \left(\frac{2\gamma}{\epsilon}\right)\max\left(1,\frac{128^2 C_a^2(R_a)}{(\log 2)^2 m}\right).
    \end{align}
The mixing time for \LatentMH and \ProxMH, $\mixlatent( \epsilon)$ and $\mixprox(\epsilon)$  satisfy a similar bound with $C_a(R_a)$  replaced by $C_l(R_l)$ and $C_p(R_p)$ respectively.
\end{theorem}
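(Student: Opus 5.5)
The plan is to follow the conductance-based route of \cite{dwivedi2019} and Theorem~4.3 of \cite{chen-biros26}. The argument is identical for the three proposals, so I write it for a generic proposal $g\in\{\pi_a,\pi_l,\pi_p\}$ with log-weight $w=\log(\pi/g)$, Lipschitz constant $C=C(R)$ and radius $R$. It then specializes by substituting $(C_a,R_a)$, $(C_l,R_l)$ or $(C_p,R_p)$.

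\textbf{Step 1: reduce to $s$-conductance.} I will use the Lovász--Simonovits bound for a $\gamma$-warm start $\mu_0$:
\begin{align}
\|\mu_0 P^n-\pi\|_{\mathrm{TV}}\le \gamma s+\gamma\left(1-\tfrac{\Phi_s^2}{2}\right)^n .
\end{align}
Choosing $s=\epsilon/(2\gamma)$, it suffices to show $\Phi_s\gtrsim \min\{1,\log 2\,\sqrt m/(128\,C)\}$. This gives
\begin{align}
n\le 128\log(2\gamma/\epsilon)\max\{1,128^2C^2/((\log 2)^2 m)\}.
\end{align}
To lower-bound $\Phi_s$ I plan to combine two ingredients. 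The first is the isoperimetric inequality for the $m$-strongly log-concave $\pi$ (\cref{assume:log-concave}), restricted to a convex set $\mathcal K=\mathsf{Ball}(x^*,R)$ of mass $\ge 1-s$. The second is a one-step overlap bound: for $x,x'\in\mathcal K$ with $\|x-x'\|\le\delta$, we want $\|P_x-P_{x'}\|_{\mathrm{TV}}\le 1-c_0$. The standard Dwivedi-type lemma then yields $\Phi_s\gtrsim c_0\,\delta\sqrt m$.

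\textbf{Step 2: overlap for IMH.} Both kernels draw the proposal $z\sim g$ independently of the state, and accept with probability $\min\{1,e^{w(z)-w(x)}\}$. The map $t\mapsto\min\{1,e^{u-t}\}$ is $1$-Lipschitz. Hence, for every $z$, the accepted parts differ by at most $|w(x)-w(x')|\le C\|x-x'\|$. The rejected parts are point masses at $x$ and at $x'$, and they are controlled by the rejection probabilities. This step requires that the acceptance probability from states in $\mathcal K$ is bounded below. To get this, I will restrict to proposals $z$ that also land in $\mathcal K$, and use the Lipschitz bound along the segment (the ball is convex) to get $e^{w(z)-w(x)}\ge e^{-2CR}$.

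I expect this step to be the main obstacle. The lower bound on acceptance must not degrade with $R$, so the calibration has to be made carefully. I would try to follow \cite{chen-biros26} here: take $\delta\asymp \log 2/(C\cdot\mathrm{const})$, and use $C\le \log 2\sqrt m/32$ so that the overlap constant is absolute.

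\textbf{Step 3: choice of radii.} The two arguments of each $\max$ encode the two requirements on $\mathcal K$. First, it must carry $\pi$-mass $\ge 1-\epsilon/(17\gamma)$. This holds by Gaussian-type concentration of strongly log-concave measures, which gives the radius $\sqrt{d/m}\,r(\cdot)$ around $x^*$ with $r$ as in \cref{eq:r-def}. Second, it must carry proposal mass $\ge 1-\epsilon/(272\gamma)$, needed in Step~2.

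For \ApproxMH, $\pi_a$ is itself strongly log-concave and concentrates around $x_a^*$; shifting centers adds $\|x^*-x_a^*\|$. For \LatentMH and \ProxMH, the proposal is the pushforward of a concentrated measure under the linear maps $\F^{-1}\Fapprox$ and $\K$ respectively. A ball of radius $\rho$ is therefore mapped into a ball of radius $\rho\,\sigma_{\max}$ of the map, which is $1/\sigma_{\min}(\Fapprox^{-1}\F)$ for \LatentMH and $\sigma_{\max}(\K)$ for \ProxMH, as in \cref{eq:radius-proximal}. Collecting the discarded masses gives the constants $17$ and $272$.
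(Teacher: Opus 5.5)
The gap is in Step~2, and it cannot be closed by calibration alone. For an IMH kernel and $x\neq x'$ one has exactly
\[
\|P_x-P_{x'}\|_{\mathrm{TV}}=1-\int g(z)\min\{a(x,z),a(x',z)\}\,dz\;\ge\;1-A(x),\qquad A(x):=\int g(z)\,a(x,z)\,dz .
\]
So any overlap constant valid on $\mathcal K$ satisfies $c_0\le\inf_{x\in\mathcal K}A(x)$. Your choice $\delta\asymp\log 2/C$ controls $|w(x)-w(x')|$ but not this acceptance level. Your segment estimate gives $A(x)\ge g(\mathcal K)\,e^{-2CR}$, and the hypothesis $C\le\log 2\sqrt m/32$ only certifies $e^{-2CR}\ge 2^{-\sqrt m R/16}$. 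Since $r(\cdot)\ge 2$ forces $R\ge 2\sqrt{d/m}$, this is at most $2^{-\sqrt d/8}$. The lower bound on $\Phi_s$, and with it the constants $128$ and $128^2/(\log 2)^2$, would therefore pick up a factor exponential in $\sqrt d$.

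A different calibration inside the ``uniform overlap on a ball'' framework does not help either, because the hypothesis does not imply a uniform acceptance bound on the ball. Take $\pi=\mathcal N(\mathbf 0,\mathbf I_d)$ (so $m=1$, $x^*=\mathbf 0$) and $g=\mathcal N(\mathbf 0,\mathrm{diag}(1-\eta,1,\dots,1))$. Then $g\le(1-\eta)^{-1/2}\pi$, $w(x)=\tfrac{\eta}{2(1-\eta)}x_1^2+\mathrm{const}$, and $C(R)=\tfrac{\eta R}{1-\eta}$, so $\eta\approx\log 2/(32R)$ is admissible. Yet $A(R\,e_1)\approx e^{-\eta R^2/2}\approx 2^{-R/64}\le 2^{-\sqrt d/32}$, even though $g$ is an almost exact proposal. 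A self-contained proof needs either a stronger hypothesis such as $C(R)\,R=O(1)$, or a different mechanism. One option: use that a $C$-Lipschitz $w$ fluctuates only by $O(C/\sqrt m)$ under an $m$-strongly log-concave measure, and bound the IMH flow $\int_S\int_{S^c}\min\{\pi(x)g(z),\pi(z)g(x)\}\,dx\,dz$ directly.

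The paper never meets this obstacle because it does not reprove Steps~1--2. The \ApproxMH and \LatentMH cases are quoted from Theorem~4.3 of \cite{chen-biros26}, and for \ProxMH the paper only verifies the inputs of Eq.~(96) there. Its single new argument is your Step~3 for $\piprox$. Since $\piprox(x\mid y)\propto q(y-\Aapprox\K^{-1}x)\,p(\K^{-1}x)$ is $m\,\sigma_{\min}^2(\K^{-1})$-strongly log-concave, it puts mass $1-O(s)$ on a ball of radius $\sqrt{d/m}\,\sigma_{\max}(\K)\,r(\cdot)$ about $x_p^*$; this is where the factor $\sigma_{\max}(\K)$ in $R_p$ comes from.

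Your pushforward route reaches the same radii: $\piprox=\K_{\#}\piapprox$ and $\pilatent=(\F^{-1}\Fapprox)_{\#}\piapprox$, and a linear map sends the mode to the mode and a ball of radius $\rho$ into a ball of radius $\sigma_{\max}\rho$. This gives the factor $\sigma_{\max}(\K)$ for \ProxMH and $1/\sigma_{\min}(\Fapprox^{-1}\F)$ for \LatentMH, and is arguably more transparent than the paper's argument. Both versions tacitly use that $\piapprox$ itself is $m$-strongly log-concave (true when the prior is, as in the main mixing-time theorem). The statement presupposes this through $R_a$ but does not list it among its hypotheses.

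So if you replace your reconstruction of Step~2 by a citation of the generic result, as the paper does, your proposal coincides with the paper's proof. As a self-contained argument it is incomplete exactly at the step you flagged.
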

\begin{proof}
The results for \ApproxMH and \LatentMH were established in Theorem~4.3 of \cite{chen-biros26}. 
We follow a similar argument to the proof of \ApproxMH to establish the result for \ProxMH.

Since
\[
\pi_p(x \mid y) \propto q\!\left(y - \Aapprox \K^{-1} x\right)\, p(\K^{-1} x),
\]
 Hence $\pi_p(x \mid y)$ is $m\,\sigma_{\min}^2(\K^{-1})$-strongly log-concave.
Consequently, there exists a constant $c_2>0$ such that
\[
\pi_p\!\left(\mathrm{Ball}(x_p^*, R_2)\right) \ge 1 - c_2 s,
\]
where
\begin{align}
R_2 = \sqrt{\frac{d}{m}}\, \sigma_{\max}(\K)\, r(c_2 s).
\end{align}

If $C_p(R_p) \le \frac{\log(2\sqrt{m})}{32}$, where $R_p$ satisfies \cref{eq:radius-proximal}, then the conditions of Equation~(96) in \cite{chen-biros26} are satisfied. 
The desired mixing-time bound for \ProxMH therefore follows.
\end{proof}

\begin{proof}[Proof of \cref{thm:mixtime}]
We prove the result by invoking \cref{thm:mix-general}. Our goal is to derive local Lipschitz constants
$C_a(R_a)$, $C_l(R_l)$, and $C_p(R_p)$ for log-weight functions $w_a(x)$, $w_l(x)$, and $w_p(x)$,
respectively. Without loss of generality, we assume that the posterior mode is
\[
x^\star := \arg\min_x \, \pi(x \mid y)
\]
is located at the origin, i.e., $x^\star = 0$. Throughout, we denote by $B(R)$ the Euclidean ball
centered at the origin with radius $R>0$.

\begin{enumerate}
\item \ApproxMH: The log-weight function is 
\begin{align}\label{eq:pf-mix-a-1}
    w_a(x) = \log \frac{\pi(x|y)}{\pi_a(x|y)} &=- \frac{1}{2\sigma^2}\left(\|y-\A x\|^2 - \|y - \Aapprox x\|^2\right)\nonumber\\
    & = \frac{1}{2\sigma^2}\left(x^\top \underbrace{ (\A + \Aapprox)^\top \Delta \A}_{=:\M_a} x + 2 y^\top \Delta \A x\right).
\end{align}
Define the symmetric version of $\M_a$ by $\M_{a,s} = \frac{1}{2}\left(\M_a + \M_a^\top\right)$, then $x^\top \M_a x = x^\top \M_{a,s} x$. Thus for any $x_1, x_2 \in B(R_a)$
\begin{align}\label{eq:pf-mix-a-2}
   \left |  x_1^\top \M_{a,s} x_1 - x_2^\top \M_{a,s} x_2\right | &= \left | (x_1+ x_2)^\top \M_{a,s} (x_1 - x_2) \right| \leq \| \M_{a,s}\| \|x_1+x_2\| \|x_1-x_2\|\nonumber\\
   &\leq 4 R_a \| \A \| \| \Delta \A\| \|x_1-x_2\|.
\end{align}
Besides, we have
\begin{align}\label{eq:pf-mix-a-3}
    \left | 2y^\top \Delta \A x_1 - 2y^\top \Delta \A x_2 \right| \leq 2 \| \Delta \A^\top y\| \|x_1 - x_2\| \leq 2 \|y\| \|\Delta \A\| \|x_1 - x_2\|.
\end{align}
Combining \cref{eq:pf-mix-a-1,eq:pf-mix-a-2,eq:pf-mix-a-3}, we have
\begin{align}
    \left | w_a(x_1)  - w_a(x_2)\right | \leq \underbrace{\frac{2R_a\|\A\| + \|y\|}{\sigma^2} \|\Delta \A\|}_{=: C_a(R_a)}\|x_1-x_2\|.
\end{align}
When $d$ is large, by \cref{thm:mix-general}, $R_a \sim \sqrt{\frac{d}{m}}$, thus 
\begin{align}
    C_a(R_a) &\sim \sqrt{\frac{d}{m}}\frac{\|\A\|}{\sigma^2} \| \Delta \A\|,\nonumber\\
    \tau_{mix}^a& \sim \frac{d}{m^2} \frac{\|\A\|^2}{\sigma^4}\|\Delta \A\|^2.
\end{align}
\item \LatentMH: Denote $f$ as the negative log prior, i.e. $f(x) = -\log p(x)$, then the log-weight function is
\begin{align}\label{eq:pf-mix-l-1}
    w_l(x) = \log \frac{\pi(x|y)}{\pi_l(x|y)} = -\left(f(x) - f(\M_l x)\right), \qquad \M_l:= \Fapprox^{-1} \F.
\end{align}

Thus we have gradient of the log-weight function
\begin{align}\label{eq:pf-mix-l-2}
    \nabla w_l(x) &= \nabla f(x) - \M_l^\top \nabla f(\M_l x)\nonumber\\
    &= \left(\b I - \M_l^\top \right)\nabla f(x) + \M_l^\top \left ( \nabla f(x) - \nabla f(\M_l x)\right)
\end{align} 
By assumption in \cref{thm:mixtime}, $f$ is $L-$smooth, thus
\begin{align}\label{eq:pf-mix-l-3}
    \| \nabla w_l(x)\| &\leq \|\b I - \M_l\| \|\nabla f(x)\| + \left\| \M_l\right\| \left \|\nabla f(x) - \nabla f(\M_l x)\right\|\nonumber\\
    & \leq \|\b I - \M_l\| \|\nabla f(x)\|  + \|\M_l\| L \|x - \M_l x\| \nonumber\\
    &\leq \|\b I - \M_l\|  \left ( \| \nabla f(x)\| + L\|\M_l\| \|x\|\right)
\end{align}
For any $x_1, x_2 \in B(R_l)$, we have
\begin{align}\label{eq:pf-mix-l-4}
    \left |w_l(x_1) - w_l(x_2) \right| \leq \sup_{x\in B(R_l)} \| \nabla w_l(x)\| \|x_1 - x_2\|.
\end{align}
By \cref{eq:pf-mix-l-3}, we have
\begin{align}
    \sup_{x\in B(R_l)} \|\nabla w_l(x)\| \leq \left \|\b I - \M_l \right\| \left(\sup_{x\in B(R_l)} \| \nabla  f (x)\| + LR_l\|\M_l\| \right),
\end{align}
where 
\begin{align}
   \sup_{x \in B(R_l)} \|\nabla f(x)\| \leq \|\nabla f(0) \| + L\|x\| \leq \|\nabla f(0)\| + L R_l.
\end{align}
Substitute into \cref{eq:pf-mix-l-4}:
\begin{align}\label{eq:pf-mix-l-5}
     \left |w_l(x_1) - w_l(x_2) \right| \leq \underbrace{\|\b I - \M_l \| \left[\left(1 +\|\M_l\|\right) LR_l + \|\nabla f(0)\|\right]}_{=: C_l(R_l)} \|x_1 - x_2\|.
\end{align}
When $d$ is large, by \cref{thm:mix-general}, $R_l \sim \sqrt{\frac{d}{m}}$, thus 
\begin{align}
    C_l(R_l) &\sim L\sqrt{\frac{d}{m}} \left(1+ \|\M_l\|\right) \| \b I- \M_l\|,\nonumber\\
    \tau_{mix}^l& \sim \frac{L^2d}{m^2} \left(1+ \|\M_l\|\right)^2 \| \b I- \M_l\|^2.
\end{align}
\item \ProxMH:  Denote $f(x) = -\log p(x)$, by \cref{eq:proximal-linear}, we have \Proximal:
\begin{align}
    \pi_p(x|y) \propto \exp\left( -\frac{1}{2\sigma^2} \|\Aapprox \K^{-1} x - y\|^2 - f(\K^{-1} x)\right)
\end{align}
the log-weight function is 
\begin{align}\label{eq:pf-mix-p-1}
    w_p(x) = \log\frac{\pi(x|y)}{\pi_p(x|y)} =- \frac{1}{2\sigma^2} \underbrace{\left(\|\A x- y\|^2 - \|\Aapprox \K^{-1} x - y\|^2\right)}_{=:h(x)}  - \left( f(x) - f(\K^{-1} x)\right).
\end{align}
According to \cref{sec:linear}, given $\xapprox$, define $x = \K \xapprox$. Then $x$ is the solution to the following optimization:
\begin{align}
    \min_x \|\A x - \Aapprox \xapprox\|^2 + \beta \|x - \xapprox\|^2.
\end{align}
Taking gradient of the objective, we can get:
\begin{align}\label{eq:pf-mix-p-2}
    \A^\top r = -\beta (x - \xapprox), \qquad r = \A x - \Aapprox \xapprox.
\end{align}
Thus 
\begin{align}\label{eq:pf-mix-p-3}
    h(x) &= h (\K \xapprox) = \left \|\A \K \xapprox - y \right\|^2 - \left \| \Aapprox \xapprox - y\right\|^2\nonumber\\
    & = \left(\A \K \xapprox + \Aapprox \xapprox - 2 y\right)^\top \left(\A \K \xapprox - \Aapprox \xapprox\right) = \left(\A \K \xapprox + \A \xapprox + \Delta \A \xapprox- 2y \right)^\top (\A x - \Aapprox \xapprox)\nonumber\\
    &\overset{\text{\cref{eq:pf-mix-p-2}}}{=} -\beta( x +\xapprox)^\top (x - \xapprox) - 2 y^\top r + \xapprox^\top (\Delta \A)^\top r.
\end{align}
For $y$, we have $y$ generated by some $x_0$ by $y = \A x_0 + e$, where $e$ is the noise, we  neglect the noise term because we discuss the scaling of the mixing time. Thus by \cref{eq:pf-mix-p-2}
\begin{align}\label{eq:pf-mix-p-4}
    y^\top r = x_0^\top \A^\top r = -\beta x_0^\top (x-\xapprox).
\end{align}
Let $\A^\dagger = (\A^\top \A + \beta \b I)^{-1}\A^\top$, we have $\K = \b I + \A^\dagger \Delta \A$. For the last term in \cref{eq:pf-mix-p-3}, we have
\begin{align}\label{eq:pf-mix-p-5}
    \xapprox^\top (\Delta \A)^\top r &= \xapprox^\top  (\Delta \A)^\top\left( \A \K \xapprox - \Aapprox \xapprox \right) = \xapprox^\top  (\Delta \A)^\top\left( \A \left(\b I + \A^\dagger \Delta\A\right) - \Aapprox \right)\xapprox \nonumber\\
    &= \xapprox^\top (\Delta \A)^\top  \left(\A \A^\dagger - \b I\right) (\Delta \A) \xapprox .
\end{align}
Substituting \cref{eq:pf-mix-p-4,eq:pf-mix-p-5} into \cref{eq:pf-mix-p-3}, we can get
\begin{align}
    h(x) &= -\beta (x+ \K^{-1} x)^\top (x - \K^{-1} x) + 2 \beta x_0^\top (x- \K^{-1} x)+ (\K^{-1} x)^\top(\Delta \A)^\top  \left(\A \A^\dagger - \b I\right) (\Delta \A) (\K^{-1} x)\nonumber\\
    & = -\beta x^\top \left(\b I + \K^{-1}\right)^\top\left(\b I-\K^{-1}\right) x - 2 \beta x_0^\top (\b I - \K^{-1}) x + x^\top \left(\K^{-T} (\Delta \A)^\top  \left(\A \A^\dagger - \b I\right) (\Delta \A)\K^{-1}\right) x.
\end{align}
Thus, for any $x_1,x_2\in B(R_p)$, we have
\begin{align}\label{eq:pf-mix-p-6}
  \frac{1}{2\sigma^2}  \left |h(x_1) - h(x_2) \right| &\leq \frac{\beta}{2\sigma^2}\left \| \b I+\K^{-1}\right\| \left \| \b I-\K^{-1}\right\| \|x_1+x_2\| \|x_1-x_2\| \nonumber\\
  &+ \frac{\beta}{\sigma^2}\left \| \b I-\K^{-1}\right\| \|x_0\|\|x_1-x_2\| + \frac{\|\A \A^\dagger - \b I\|}{\sigma^2} \|\K^{-1}\|^2 \|\Delta \A\|^2 \|x_1+x_2\| \|x_1-x_2\|\nonumber\\
  & \lesssim R_p \left \| \b I+\K^{-1}\right\| \left \| \b I-\K^{-1}\right\| \|x_1-x_2\|,
\end{align}
where we derived the last equation by using $\beta= \Theta(\sigma^2)$.

As for the local Lipschitzness of another term in \cref{eq:pf-mix-p-1}, we adopt what we derived for the \LatentMH, i.e. by \eqref{eq:pf-mix-l-5}, we have for any $x_1, x_2 \in B(R_p)$,
\begin{align}\label{eq:pf-mix-p-7}
    \left |\left(f(x_1) -f (\K^{-1} x_1\right)  - \left(f(x_2) - f(\K^{-1} x_2)\right)\right| \leq \left \| \b I - \K^{-1} \right\|\left [ (1+\|\K^{-1}\|) LR_p + \|\nabla f(0)\| \right] \|x_1 - x_2\|.
\end{align}
Combining \cref{eq:pf-mix-p-1,eq:pf-mix-p-6,eq:pf-mix-p-7}, we have for any $x_1, x_2\in B(R_p)$
\begin{align}
    \left|w_p(x_1) - w_p(x_2) \right|\lesssim C_p(R_p)\|x_1-x_2\|,
\end{align}
where 
\begin{align}
    C_p(R_p) = (L+1) R_p (1+ \|\K^{-1}\|)\left \| \b I-\K^{-1}\right\|.
\end{align}
When $d$ is large, by \cref{thm:mix-general}, $R_p \sim \sqrt{\frac{d}{m}}$, and
\begin{align}
    \tau_{mix}^p \sim \frac{d(L+1)^2}{m^2} (1+ \|\K^{-1}\|)^2\left \| \b I-\K^{-1}\right\|^2.
\end{align}

\end{enumerate}

\end{proof}

\subsection{Derivation of \cref{eq:log-det-ratio}}

Let $\Aapprox G(x)$ denote an approximate forward model and assume that $\Delta \A = \delta \A$ for some $\delta \in (0,1)$. The residual is $r(\xapprox)=\A G(\xapprox)-\Aapprox G(\xapprox)=\delta \A\,G(\xapprox)$.
Then the Gauss--Newton update from \cref{eq:gn} becomes
\begin{align}\label{eq:pf-gn-1}
    GN(\xapprox):= \xapprox - \delta\left(\J(\xapprox)^\top \J(\xapprox) + \beta \mathbf{I}\right)^{-1}
\J(\xapprox)^\top \A G(\xapprox),
\end{align}

For small residuals $r$, we neglect second-order terms in the
linearization of the Gauss--Newton update \cref{eq:gn}.
Under this approximation, the Jacobian of a single Gauss--Newton step \cref{eq:pf-gn-1} takes the form 
\begin{align}
\J_{GN}(\xapprox)&= \b I - \delta \left(\J(\xapprox)^\top \J(\xapprox) + \beta \mathbf{I}\right)^{-1} \J(\xapprox)^\top  (\A \J _{G}(\xapprox))\nonumber\\
&=\b I - \delta \left(\J(\xapprox)^\top \J(\xapprox) + \beta \mathbf{I}\right)^{-1} \J(\xapprox)^\top \J(\xapprox) \nonumber\\
&=(1-\delta)\mathbf{I}+\delta \beta\left(\J^\top(\xapprox)\J(\xapprox) + \beta\mathbf{I}\right)^{-1},
\end{align}
where the second equality is derived by the fact that $\J(\xapprox) = \A \J_G(\xapprox)$.

For two arbitrary points $\xapprox_1$ and $\xapprox_2$, let
$\{\lambda_{1,i}\}_{i\in[d_x]}$ and $\{\lambda_{2,i}\}_{i\in[d_x]}$
denote the eigenvalues (in descending order) of
$\J^\top(\xapprox_1)\J(\xapprox_1)$ and
$\J^\top(\xapprox_2)\J(\xapprox_2)$, respectively.
The logarithm of the Jacobian determinant ratio can then be expressed as
\begin{align}
\log
\frac{|\det \J_{GN}(\xapprox_1)|}
     {|\det \J_{GN}(\xapprox_2)|}
=
\sum_{i\in[d_x]}
\log
\frac{1 - \delta \lambda_{1,i}/(\lambda_{1,i} + \beta)}
     {1 - \delta \lambda_{2,i}/(\lambda_{2,i} + \beta)}.
\end{align}

\section{Helmholtz Forward Model}\label{sec:helmholtz}
\subsection{Linearized Helmholtz (Born Approximation) Model}\label{sec:born}

Consider the Helmholtz equation
\begin{equation}
\mathcal{L}(x) u := \left(-\Delta - k^2 x\right) u = f,
\end{equation}
where $x$ denotes the spatially varying contrast parameter and $k>0$ is the wavenumber. Let $x_0$ be a known background medium and write
\[
x = x_0 + \delta x,
\]
where $\delta x$ is a small perturbation.

Let $u_0$ denote the background wavefield solving
\begin{align}
\mathcal{L}(x_0) u_0 = f.
\end{align}

Substituting this decomposition into the Helmholtz equation and retaining only first-order terms in $\delta x$ yields the linearized equation
\begin{align}
\mathcal{L}(x_0)\, \delta u
= k^2\, \delta x \, u_0,
\end{align}
where $\delta u := u - u_0$ denotes the scattered field.

The scattered field can therefore be expressed as
\begin{align}
\delta u
= \mathcal{L}(x_0)^{-1}
\!\left(k^2\, \delta x \, u_0\right).
\end{align}
This relation defines a linear map from the contrast perturbation $\delta x$ to the scattered field $\delta u$.
For convenience, we introduce the operator
\begin{align}\label{eq:linear-helmholtz-F}
F := \mathcal{L}(x_0)^{-1}
\!\left(k^2\, \mathrm{diag}(u_0)\right),
\end{align}
so that $\delta u = F\, \delta x$.

\subsection{Nonlinear Forward Model and Adjoint}
\label{sec:nonlinear-helmholtz}

For the full nonlinear problem, the forward model is given by
\begin{align}
\begin{cases}
    &\mathcal{L}(x) u = f, \quad \text{where } \mathcal{L}(x):=-\Delta - k^2 x\\
    &y = \b O u,
\end{cases}
\end{align}
where the wavefield $u = u(x)$ depends nonlinearly on the contrast parameter $x$.

Given a current parameter $x$, the forward wavefield $u$ is computed by solving the Helmholtz equation using GMRES in a matrix-free fashion, without explicitly forming the system matrix.

\paragraph{Adjoint formulation.}
Let $r := \b O u - y$ denote the data residual.
The adjoint field $v$ is defined as the solution of the adjoint Helmholtz equation
\begin{align}
\mathcal{L}(x)^\top v = \b O^\top r,
\end{align}
with adjoint boundary conditions consistent with the forward problem.

The gradient of the data misfit functional
\begin{align}
\Phi(x) := \tfrac12 \|\b O u - y\|^2
\end{align}
with respect to $x$ is given by
\begin{align}
\nabla_x \Phi(x)
= -k^2\, u \odot v,
\end{align}
where $\odot$ is the pointwise (Hadamard) product.

\section{Additional Experimental Details}

\begin{table*}[t]
\centering
    \caption{Distributions of \texttt{Exact}, \texttt{Approx}, \texttt{Latent} and \texttt{Proximal} posteriors for Gaussian prior $p(x) = \mathcal{N}(\b 0, \b I)$ and noise $q(e) = \mathcal{N}(\b 0, \sigma^2 \b I)$. For \Proximal, $\K = (\A^\top \A + \beta \b I)^{-1} (\A^\top \Aapprox+ \beta \b I)$.}
    \label{tab:posterior-gaussian}
\footnotesize
    \begin{tabular}{ccccc}
\toprule
Posterior & Distribution & Mean& Covariance & Pseudo-inverse\\\midrule
\texttt{Exact} &$ \pi(x \mid y)=\mathcal{N}(\mu, \bSigma)$  &$\mu =\A^\dagger y $ &$\bSigma=\b I - \A^\dagger \A$ & $\A^\dagger=\A^\top \left( \A\A^\top + \sigma^2 \b I\right)^{-1}$ \\
\texttt{Approx} &$ \piapprox(x \mid y)=\mathcal{N}(\mu_a, \bSigma_a)$  &$\mu_a =\A_a^\dagger y $ &$\bSigma_a=\b I - \A_a^\dagger \Aapprox$ & $\A_a^\dagger=\Aapprox^\top \left( \Aapprox\Aapprox^\top + \sigma^2 \b I\right)^{-1}$ \\
\texttt{Latent} &$ \pinew(x \mid y)=\mathcal{N}(\mu_l, \bSigma_l)$  &$\mu_l =\A_l^\dagger y $ &$\bSigma_l=\F^{-1} \Fapprox\bSigma_a  \Fapprox^\top \F^{-\top} $ & $\Anew^\dagger= \F^{-1} \Fapprox\A_a^\dagger$ \\
\texttt{Proximal} &$ \pi_p(x \mid y)=\mathcal{N}(\mu_p, \bSigma_p)$ & $\mu_p = \A_p^\dagger y$ & $\bSigma_p = \K \bSigma_a\K^\top$ & $\A_p^\dagger = \K \A_a^\dagger$\\
\bottomrule
    \end{tabular}
\end{table*}

\begin{table*}[t]
\small
    \centering
    \begin{tabular}{ccccc}
      \toprule  
  Effects  & $\log_{10}\mathrm{SNR}$ & $\frac{\| \F - \Fapprox\|}{\| \F\|}$  & $d_y/d$ & $d$\\
      \midrule
  Noise level     & 0.5 \textemdash 4.0 & 6\% & 0.2  & 500\\
 Operator error & 2.5& 2\% \textemdash 21\%& 0.2&500 \\
  Observation ratio &2.5  &6\% &0.05 \textemdash 0.5 & 500\\
  Dimension &2.5 &6\% &0.2 & 100 \textemdash 2000\\
         \bottomrule
    \end{tabular}
    \caption{Experimental setup for comparing $\KLapprox$, $\KLlatent$ and $\KLprox$.}
    \label{tab:kl-experiments}
\end{table*}

\subsection{Proposal comparisons in \cref{sec:kl}}\label{sec:append-kl}
In \cref{tab:posterior-gaussian}, we provide the Gaussian formulation of different posteriors when prior is normal and noise is Gaussian distribution with variance $\sigma^2$. \cref{tab:kl-experiments} presents the experimental setup for the sensitivity test on KL divergence in \cref{fig:kl}.

\subsection{Bimodal test in \cref{sec:2well}}
The histogram of the projection of prior samples in the bimodal test (see \cref{eq:prior-bimodal}) is plotted in \cref{fig:2well-prior}.

\begin{figure}[t]
    \centering
    \includegraphics[width=0.3\linewidth]{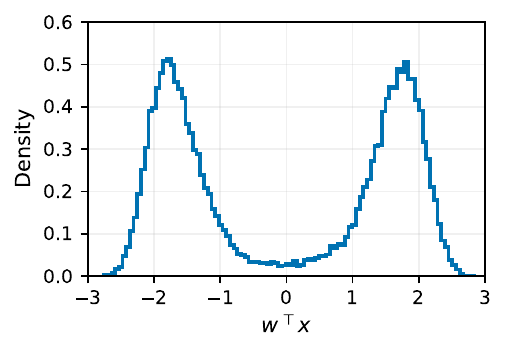}
    \caption{Histogram of the projection of prior samples in the bimodal test (see \cref{eq:prior-bimodal}).}
    \label{fig:2well-prior}
\end{figure}


\subsection{MNIST inverse problem in~\cref{sec:mnist}}\label{sec:append-mnist}

\paragraph{$\A$ and $\Aapprox$ construction in Tests~VI and~VII (\cref{sec:mnist}).}
We consider the linearized Helmholtz forward model introduced in \cref{sec:born}.
In our experiments, the parameter $x$ is represented on a $28\times 28$ grid corresponding to MNIST images, while the wavefield $u$ is discretized on a finer $100\times 100$ spatial grid. Consequently, the dimensions of the parameter and wavefield are $d_x = 784$ and $d_u = 10^4$, respectively. The wave number is set to $k=2.4 \pi$ in our experiments.

The linearized operator $F$ defined in \cref{eq:linear-helmholtz-F} is discretized on a two-dimensional uniform grid using a second-order finite-difference scheme with homogeneous Neumann boundary conditions; we denote the fine-grid inverse Helmholtz operator by $\F \in \mathbb{R}^{d_u \times d_u}$.
A sparse prolongation operator $\b P \in \mathbb{R}^{d_u \times d_x}$ maps the parameter $x$ from the coarse $28\times 28$ grid to the fine $100\times 100$ grid on which the Helmholtz equation is solved.
With an observation operator $\b O \in \mathbb{R}^{d_y \times d_u}$, the resulting fine-scale forward operator $\A \in \mathbb{R}^{d_y \times d_x}$ is defined as
\begin{align}
\A := \b O\, \F\, \b P.
\end{align}
The corresponding forward model from the latent variable $z$ to the observation $y$ is then given by
\begin{align}
y = \A\, G(z) + e,
\end{align}
where $G : \mathbb{R}^{d_z} \rightarrow \mathbb{R}^{d_x}$ denotes the decoder of the variational autoencoder (VAE).

To construct the approximate operator $\Aapprox$, we introduce coarser spatial grids of size $\widetilde{n}_u \times \widetilde{n}_u$ for the wavefield $u$, where $\widetilde{n}_u \in \{20, 35\}$ in our experiments.
We denote the corresponding discretized Helmholtz operator on the coarse grid by $\Fapprox \in \mathbb{R}^{\widetilde{d}_u \times \widetilde{d}_u}$, with $\widetilde{d}_u = \widetilde{n}_u^2$.
Sparse prolongation operators $\b P_u \in \mathbb{R}^{d_u \times \widetilde{d}_u}$ and $\b P_x \in \mathbb{R}^{\widetilde{d}_u \times d_x}$ are used to map vectors between the coarse and fine grids.
The resulting approximate forward operator is then defined as
\begin{align}\label{eq:linear-Aapprox-helm}
\Aapprox := \b O\, \b P_u\, \Fapprox\, \b P_x.
\end{align}

The dimensions involved in the construction of $\A$ and $\Aapprox$ are summarized in \cref{tab:dimension-mnist}, and schematic diagrams of the corresponding matrix factorizations are shown in \cref{fig:matrix-construction-mnist}.

\paragraph{Jacobian Determinant Diagnostics of the Gauss–Newton Correction.} Figure~\ref{fig:logdet} examines the variability of the Jacobian determinant associated with a single Gauss--Newton correction across posterior samples and operator approximation levels. In the left panels, we plot the sorted log determinants of the Gauss--Newton Jacobian evaluated at 2000 randomly selected posterior samples.
As the relative operator approximation error $\|\A-\Aapprox\|/\|\A\|$ increases, the log-determinant values exhibit a systematic shift and increased spread, indicating stronger local volume distortion induced by the correction map. The right plots assess relative variability by considering ratios of Jacobian determinants between 2000 randomly selected pairs of approximate posterior samples from $\pi_a(x \mid y)$.
Across all tests, these ratios remain tightly concentrated around one, with the 5\%--95\% quantile range remaining narrow even at larger approximation errors.
This suggests that, despite changes in absolute scale, the Gauss--Newton transformation induces relatively uniform local volume changes across the posterior in the test.

\begin{table}[t]
\caption{Dimensions of parameters in Tests VI and VII in \cref{sec:mnist}.}
\centering
\begin{tabular}{ccccc}
\toprule
$d_z$ & $d_x$ &$d_u$ & $\widetilde{d}_u$ &$d_y$\\
\midrule
128  & 784 {\color{gray}($=28^2$)} & $10^4$ {\color{gray}($=100^2$)} &$\{35^2,\, 20^2\}$ & 100\\
\bottomrule
\end{tabular}
\label{tab:dimension-mnist}
\end{table}

\begin{figure}[t]
\centering
\begin{tikzpicture}[
  scale=0.85,
  every node/.style={transform shape},
  mat/.style={draw, rounded corners=2pt, fill=black!3, inner sep=0pt},
  lab/.style={font=\small},
  eq/.style={font=\large}
]

\def\Hdy{0.7cm}    
\def\Hdx{1.3cm}    
\def\Hdut{1.4cm}   
\def\Hdu{2.8cm}    

\def\Wdx{1.3cm}    
\def\Wdut{1.4cm}   
\def\Wdu{2.8cm}    

\node[mat, minimum height=\Hdy, minimum width=\Wdx] (A)
  {\large$\mathbf A$};
\node[lab, below=2pt of A] {$d_y \times d_x$};

\node[eq, right=1mm of A] (EqL) {$=$};

\node[mat, minimum height=\Hdy, minimum width=\Wdu, right=1mm of EqL] (O)
  {\large$\mathbf O$};
\node[lab, below=2pt of O] {$d_y \times d_u$};

\node[mat, minimum height=\Hdu, minimum width=\Wdu, right=2mm of O] (F)
  {\large$\mathbf F$};
\node[lab, below=2pt of F] {$d_u \times d_u$};

\node[mat, minimum height=\Hdu, minimum width=\Wdx, right=2mm of F] (P)
  {\large$\mathbf P$};
\node[lab, below=2pt of P] {$d_u \times d_x$};

\begin{scope}[xshift=10cm]  

\node[mat, minimum height=\Hdy, minimum width=\Wdx] (Ahat)
  {\large$\Aapprox$};
\node[lab, below=2pt of Ahat] {$d_y \times d_x$};

\node[eq, right=1mm of Ahat] (EqR) {$=$};

\node[mat, minimum height=\Hdy, minimum width=\Wdu, right=1mm of EqR] (Oh)
  {\large$\mathbf O$};
\node[lab, below=2pt of Oh] {$d_y \times d_u$};

\node[mat, minimum height=\Hdu, minimum width=\Wdut, right=2mm of Oh] (Pu)
  {\large$\mathbf P_u$};
\node[lab, below=2pt of Pu] {$d_u \times \widetilde{d}_u$};

\node[mat, minimum height=\Hdut, minimum width=\Wdut, right=2mm of Pu] (Fhat)
  {\large$\Fapprox$};
\node[lab, below=2pt of Fhat] {$\widetilde{d}_u \times \widetilde{d}_u $};

\node[mat, minimum height=\Hdut, minimum width=\Wdx, right=2mm of Fhat] (Px)
  {\large$\mathbf P_x$};
\node[lab, below=2pt of Px] {$\widetilde{d}_u \times d_x$};

\end{scope}

\end{tikzpicture}
\vspace{-1mm}
\caption{
Matrix constructions of $\A$ and $\Aapprox$ in Tests VI and VII in \cref{sec:mnist}.
Left: $\A = \b O\,\F\,\b P$.
Right: $\Aapprox = \b O\,\b P_u\,\Fapprox\,\b P_x$. In our setting $d_y < d_x <d_u$, and $\widetilde{d}_u <d_u$. 
}
\label{fig:matrix-construction-mnist}
\end{figure}

\begin{figure}[t]
\centering
\scriptsize
\begin{tikzpicture}
\node[inner sep=0pt] (a)
  {\includegraphics[width=0.24\textwidth]{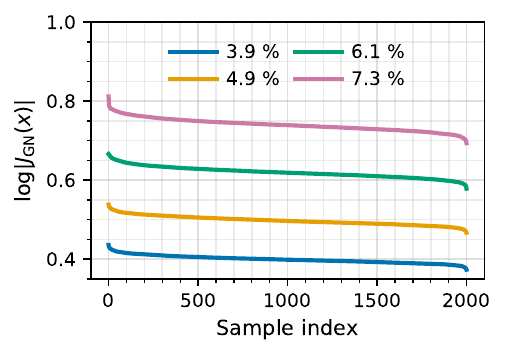}};
\node[inner sep=0pt, right=0.01\textwidth of a] (b)
  {\includegraphics[width=0.24\textwidth]{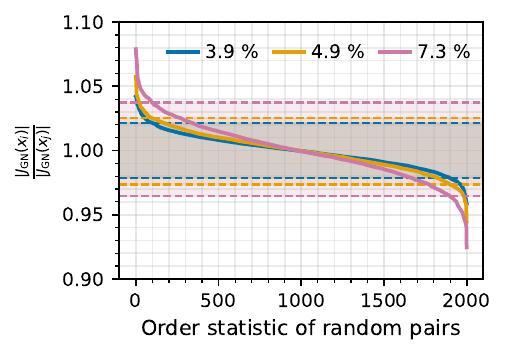}};

\node[inner sep=0pt, below=2mm of a] (c)
  {\includegraphics[width=0.24\textwidth]{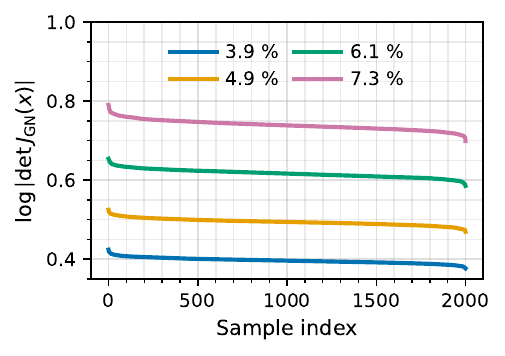}};
\node[inner sep=0pt, below=2mm of b] (d)
  {\includegraphics[width=0.24\textwidth]{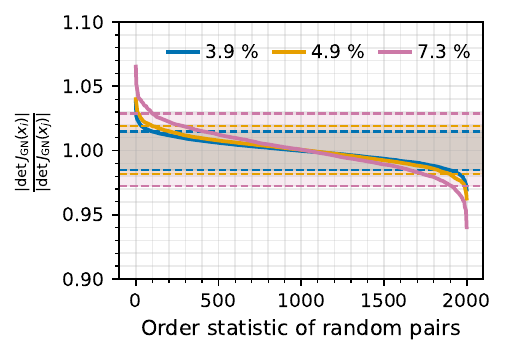}};

\node[rotate=90, anchor=south, font=\small]
  at ($(a.west)+(-2mm,0)$) {Test IV};

\node[rotate=90, anchor=south, font=\small]
  at ($(c.west)+(-2mm,0)$) {Test V};

\end{tikzpicture}
\vspace{-1mm}
\caption{Jacobian log-determinant diagnostics for the Gauss--Newton correction. Each row corresponds to one test case.
\textbf{Left:} Sorted values of the log determinant of the Jacobian of a single Gauss--Newton transformation, evaluated at 2000 randomly selected posterior samples from $\pi_a(x\mid y)$. \textbf{Right:} Order statistics of ratios of Jacobian determinants between 2000 randomly selected pairs of posterior samples from $\pi_a(x\mid y)$, i.e.,
$\det J_{\mathrm{GN}}(x_i) / \det J_{\mathrm{GN}}(x_j)$.
The numbers shown in the legends indicate the relative operator approximation error $\|\A - \Aapprox\| / \|\A\|$.
In the right panels, dashed lines denote the 5\%--95\% quantile range.
}
\label{fig:logdet}
\end{figure}

\subsection{Nonlinear Helmholtz test in \cref{sec:test-nonlinear}}\label{sec:append-nonlinear}
\paragraph{Forward operator $\A_i(x)$ and its approximation $\Aapprox_i(x)$.}
The nonlinear forward model and its adjoint are introduced in \cref{sec:nonlinear-helmholtz}.
The nonlinear Helmholtz operator is defined as
\[
\mathcal{L}(x) := -\Delta - k^2 x .
\]
The medium parameter $x$ is represented on a $32\times 32$ uniform grid over the domain $[0,1]\times[0,1]$.
The Helmholtz operator $\mathcal{L}(\cdot)$ is discretized on a finer $128\times 128$ uniform grid using a second-order finite-difference scheme with homogeneous Neumann boundary conditions; we denote the resulting nonlinear discretized operator by $\b L(\cdot)$. The wavenumber is set to $k=3.4\pi$ in this test.

We consider $n_s=4$ distinct source terms $\{f_i\}_{i\in[n_s]}$.
For each source, the corresponding observation is generated by
\begin{align}
    y_i = \b O\, \b L^{-1}(\b P x)\, f_i + e_i, 
    \qquad i\in[n_s],
\end{align}
where $\b P$ denotes a prolongation operator mapping the medium parameter from the coarse grid to the fine grid, and $\b O$ is the observation operator.
Accordingly, the nonlinear forward operator is defined as
\begin{align}\label{eq:nonlinear-A}
    \A_i(x) := \b O\, \b L^{-1}(\b P x)\, f_i,
    \qquad i\in[n_s].
\end{align}

To construct an approximate forward model, we introduce a coarser $64\times 64$ grid for the wavefield discretization.
Let $\widetilde{\b L}(\cdot)$ denote the discretized nonlinear Helmholtz operator on this coarse grid.
Analogous to the linear construction in \cref{eq:linear-Aapprox-helm}, we define the approximate nonlinear forward operator as
\begin{align}\label{eq:nonlinear-Aapprox}
    \Aapprox_i(x) := \b O\, \b P_u\, \widetilde{\b L}^{-1}(\b P_x x)\, f_i,
    \qquad i\in[n_s],
\end{align}
where $\b P_x$ and $\b P_u$ are sparse prolongation operators used to map the medium parameter and wavefield, respectively, from coarse grids to the fine grid. \cref{tab:dimension-nonlinear} summarizes the dimensions of parameters involved in this test.

\paragraph{GMRES solver with Neumann--DCT preconditioning.}
We solve the (nonlinear) Helmholtz linear systems arising in \cref{eq:nonlinear-A} and \cref{eq:nonlinear-Aapprox} and related adjoint computations using GMRES with a matrix-free spectral preconditioner. The preconditioner is based on a constant-coefficient approximation of the Helmholtz operator,
\[
\mathcal{M} \;:=\; -\Delta + k^2\bigl(1+\bar x\bigr),
\]
where $\bar x$ denotes the spatial average of the medium parameter $x$ and homogeneous Neumann boundary conditions are imposed.
On a uniform $n\times n$ grid, the Neumann Laplacian is diagonalized by the two-dimensional discrete cosine transform (DCT).
Let $\Lambda$ denote the eigenvalues of $-\Delta$ under Neumann boundary conditions.
Applying $\mathcal{M}^{-1}$ to a vector $v$ is implemented by reshaping $v$ to an $n\times n$ array, computing its 2D DCT, dividing entrywise by $\Lambda + k^2(1+\bar x)$, and transforming back via the inverse 2D DCT.
This results in an efficient $\mathcal{O}(n^2\log n)$ preconditioner application that substantially accelerates GMRES convergence.

\paragraph{Proposal for \ProxMH.} The objective of the proximal correction in this multiple-source setting is the following: let $\xapprox \sim \pi_a$, the proposal of \ProxMH is
\begin{align}\label{eq:prox-multi-source}
    x\gets \argmin_{x} \sum_{i\in[n_s]}\left\|\A_i(x) - \Aapprox_i (\xapprox)\right\|^2 + \beta \|x-\xapprox\|^2.
\end{align}

\paragraph{One Gauss--Newton step formulation for \cref{eq:prox-multi-source}.} A single Gauss--Newton step to optimize \cref{eq:prox-multi-source} initialized at \(\xapprox\) is given by
\begin{align}
\begin{cases}
    &x \gets \xapprox + \delta x,\\
    \left(
\sum_{i=1}^{n_s} \J_i(\xapprox)^\top \J_i(\xapprox)
+ \beta \mathbf{I}
\right)
&\delta x = -\sum_{i=1}^{n_s} \J_i(\xapprox)^\top \left(
\A_i(\xapprox) - \Aapprox_i(\xapprox) \right),
\end{cases}
\end{align}
where $\J_i(\xapprox) := \left.\dfrac{\partial \A_i(x)}{\partial x}\right|_{x=\xapprox}$ 
denotes the Jacobian of $\A_i$ evaluated at $\xapprox$.

\begin{table}[t]
\caption{Dimensions of parameters in the nonlinear Helmholtz test in \cref{sec:test-nonlinear}.}
\centering
\begin{tabular}{cccc}
\toprule
$d_x$ &$d_u$ & $\widetilde{d}_u$ &$d_y$\\
\midrule
1024 {\color{gray}($=32^2$)} & 16,384 {\color{gray}($=128^2$)} & 4,096 {\color{gray}($=64^2$)} & 512\\
\bottomrule
\end{tabular}
\label{tab:dimension-nonlinear}
\end{table}

\begin{figure}[t]
    \centering
    \includegraphics[width=0.4\linewidth]{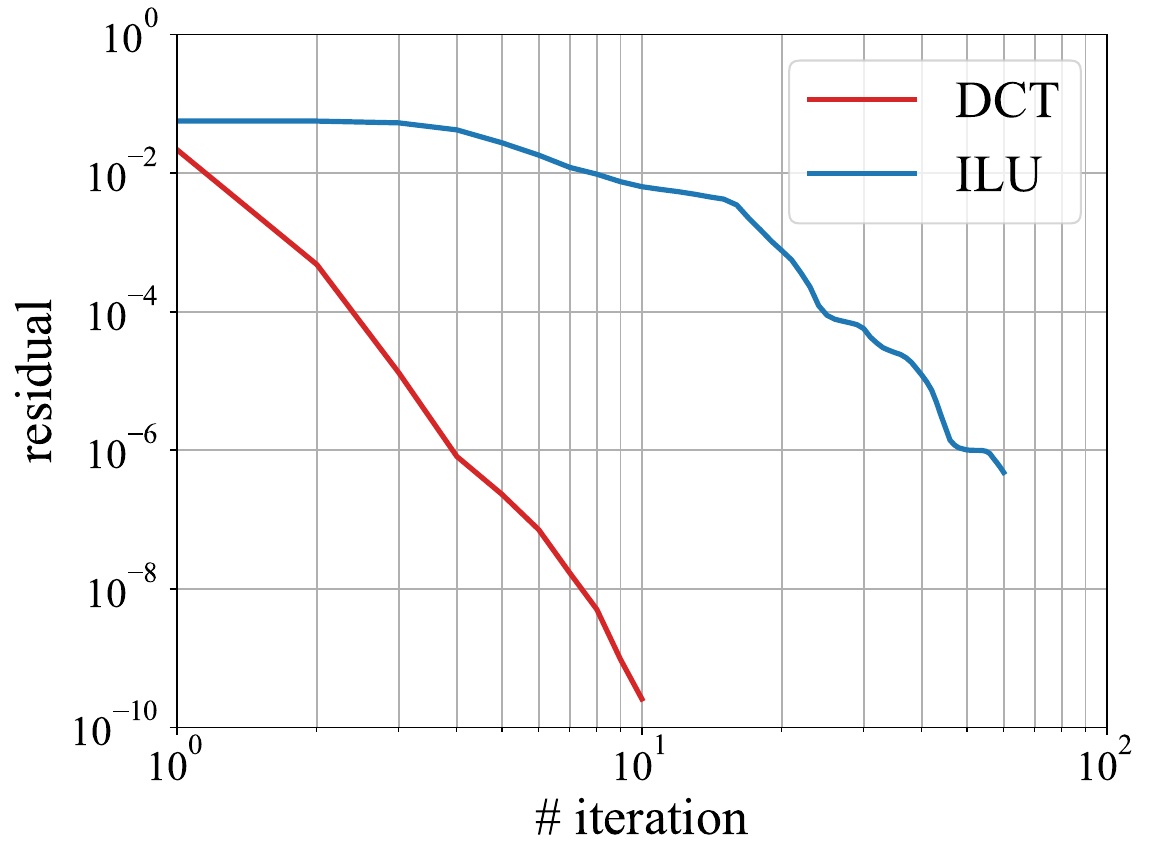}
    \caption{Effect of preconditioning on GMRES convergence for the nonlinear Helmholtz test.}
    \label{fig:gmres}
\end{figure}

\begin{figure}[t]
\centering
\begin{tikzpicture}[
  rowlabel/.style={
    rotate=90,
    anchor=center,
    text width=10mm,
    align=center,
    font=\scriptsize\bfseries,
    inner sep=1pt
  },
  toplabel/.style={
    font=\small\bfseries,
    align=center
  }
]

\def\W{0.6\columnwidth}  
\def\ygap{2mm}           
\def\xlab{-7mm}          
\def\ytitle{3mm}         

\node[inner sep=0pt] (f1)
  {\includegraphics[width=\W]{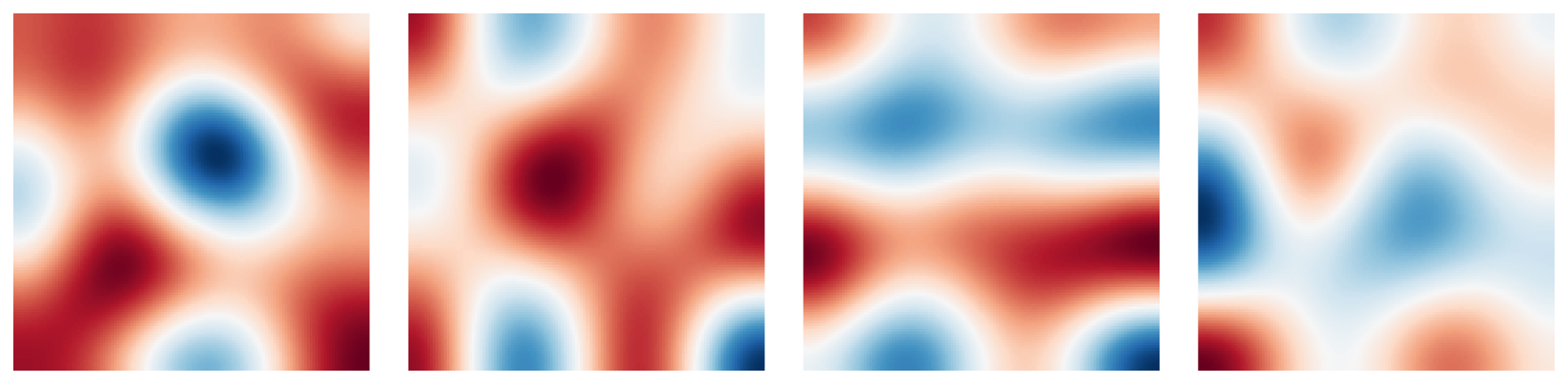}};
\node[rowlabel] at ([xshift=\xlab]f1.west) {Ground Truth};

\node[toplabel] at ([xshift=14mm,yshift=\ytitle]f1.north west) {$u_1$};
\node[toplabel] at ([xshift=8mm,yshift=\ytitle]$(f1.north west)!0.33!(f1.north east)$) {$u_2$};
\node[toplabel] at ([xshift=-2mm,yshift=\ytitle]$(f1.north west)!0.66!(f1.north east)$) {$u_3$};
\node[toplabel] at ([xshift=-15mm,yshift=\ytitle]f1.north east) {$u_4$};

\node[inner sep=0pt, below=\ygap of f1] (f2)
  {\includegraphics[width=\W]{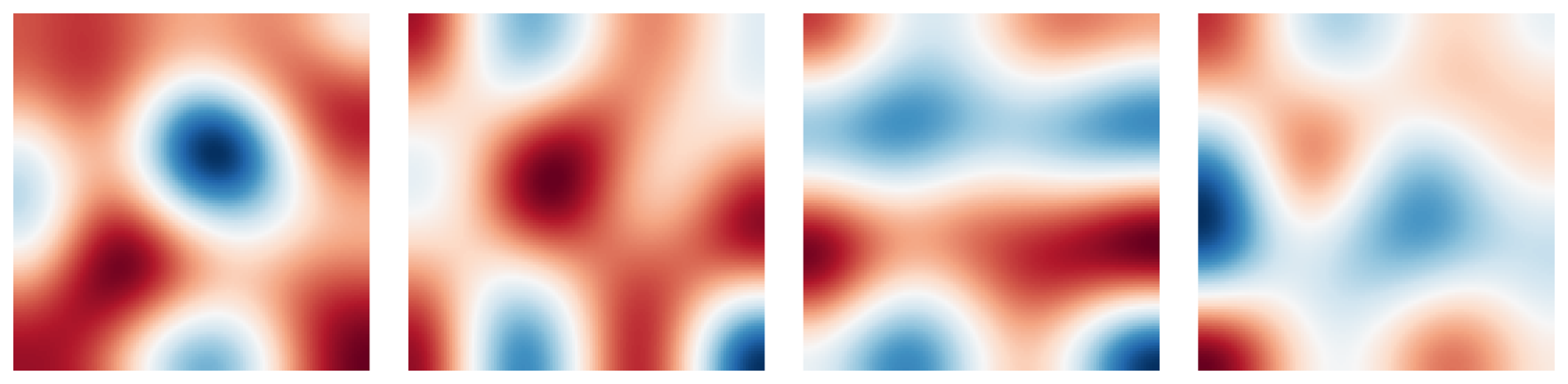}};
\node[rowlabel] at ([xshift=\xlab]f2.west) {MAP};

\node[inner sep=0pt, below=\ygap of f2] (f3)
  {\includegraphics[width=\W]{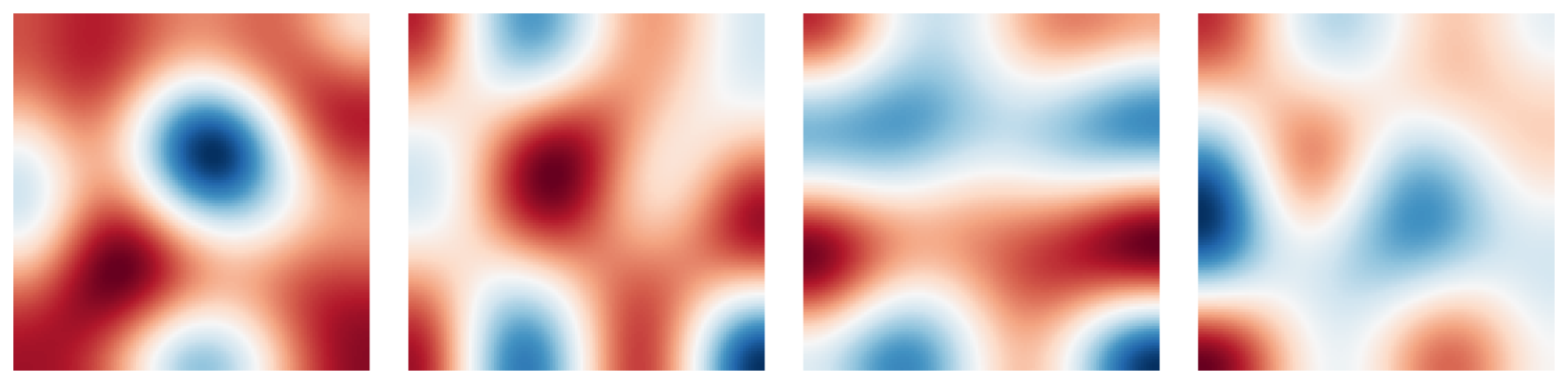}};
\node[rowlabel] at ([xshift=\xlab]f3.west) {Posterior Mean};

\end{tikzpicture}
\caption{Wavefields $\{u_i = \mathcal{L}(x)^{-1} f_i\}_{i=1}^4$ for the nonlinear Helmholtz test.
Each column corresponds to a distinct source $f_i$.
\textbf{Top:} Ground-truth wavefields.
\textbf{Middle:} Wavefields obtained from the MAP estimate.
\textbf{Bottom:} Posterior mean wavefields computed from posterior samples.}
\label{fig:u-plots}
\end{figure}

\end{document}